\documentclass[11pt]{article}
\usepackage[T1]{fontenc}
\usepackage[utf8]{inputenc}
\usepackage{lmodern}
\usepackage[margin=1in]{geometry}
\usepackage{amsmath,amssymb,amsthm,mathtools,bm}
\usepackage{microtype}
\usepackage{enumitem,booktabs}
\usepackage[round,authoryear]{natbib}
\usepackage[hidelinks]{hyperref}
\usepackage{bookmark}
\usepackage{authblk}

\newtheorem{thm}{Theorem}[section]
\newtheorem{prop}[thm]{Proposition}

\newtheorem{cor}[thm]{Corollary}
\theoremstyle{definition}
\newtheorem{defn}[thm]{Definition}
\theoremstyle{remark}

\newenvironment{eqn}{\begin{equation}}{\end{equation}}
\numberwithin{equation}{section}
\newcommand{\R}{\mathbb R}

\newcommand{\E}{\mathbb E}
\newcommand{\PP}{\mathbb P}
\newcommand{\F}{\mathcal F}
\newcommand{\cL}{\mathcal L}
\newcommand{\ind}{\mathbf 1}
\newcommand{\topr}{^{\mathsf T}}
\newcommand{\op}{\mathrm{op}}
\DeclareMathOperator{\tr}{tr}
\DeclareMathOperator{\Cov}{Cov}
\DeclareMathOperator{\Var}{Var}
\DeclareMathOperator{\KL}{D_{KL}}
\DeclareMathOperator*{\argmax}{arg\,max}

\newcommand{\pto}{\xrightarrow{\PP}}

\setlist[itemize]{leftmargin=*,itemsep=0.55em,topsep=0.5em}
\setlist[enumerate]{leftmargin=*,itemsep=0.35em}
\allowdisplaybreaks[2]
\author[1]{Sudarshan Manikantan\thanks{manikantan.sudarshan@gmail.com}}
\author[2]{Abhishek Bhattacharjee\thanks{abhishek@theabstractmath.com}}

\affil[1,2]{Abstract Math Institute}
\affil[1]{\texttt{manikantan.sudarshan@gmail.com}}
\affil[2]{\texttt{abhishek@theabstractmath.com}}
\date{}
\title{Adaptive Random Matrices in Gaussian Bandits:\\Spectral Universality and Selection-Induced Outliers}
\begin{document}
\maketitle
\begin{abstract}
Adaptive arm selection changes the distribution of the observations collected by a bandit algorithm, but it need not change their limiting empirical spectrum. We study Gaussian bandit designs in which the dimension and the number of observations grow proportionally. A quantitative coupling theorem compares the design generated by any causal selection rule with an independent Gaussian design. If the logarithm of the number of available arms is sublinear in the dimension, the empirical spectral distribution converges to the Marchenko--Pastur law, uniformly over the selection rule. Consequently, Gaussian Bayesian bandits have policy-independent first-order limits for posterior mean-square uncertainty, squared posterior covariance, and information acquisition. For linear-score selection, we obtain the exact conditional arm distribution and show that two-arm selection produces an exactly Wishart Gram matrix in every dimension, despite its nonzero conditional mean. For a fixed selection direction, we identify an explicit eigenvalue and eigenvector transition governed by the second moment of a Gaussian maximum. A counterexample shows that a direction's overlap with a reference signal does not determine this transition. Finally, an exponentially large arm pool permits a different bulk limit, establishing the order-sharpness of the arm-growth condition. These results distinguish global spectral stability from directional effects in adaptive bandit data.
\end{abstract}
\noindent\textbf{Keywords:} Gaussian bandit; adaptive design; Marchenko--Pastur law; posterior covariance; optimal transport; spectral outlier.

\section{Introduction}\label{sec:intro}
A linear bandit observes the reward of a selected covariate while the selection rule evolves with the accumulated data. The resulting Gram matrix is therefore generated jointly by fresh randomness and the algorithm's previous decisions. Its spectrum determines several statistical features of the experiment, including regularized estimation error and, under a Gaussian model, posterior uncertainty. Understanding this matrix is a problem in adaptive random matrix theory, not merely a consequence of a regret inequality.

The proportional regime is particularly informative. When the number of observations is comparable to the dimension, the eigenvalues of an independent Gaussian Gram matrix do not concentrate at a single point. A nontrivial empirical spectral distribution remains, and replacing the matrix by its expectation loses first-order information. Adaptivity introduces an additional difficulty: future observations depend on earlier rewards, so the columns of the selected design are generally dependent and non-Gaussian.

This paper establishes a distinction between two effects of selection. The first concerns the empirical distribution of all eigenvalues. The second concerns a small number of distinguished eigenvalues and directions. For a subexponential number of Gaussian candidates, the first effect vanishes at leading order for every causal policy. The second can persist. Thus, a selection-induced directional deformation does not by itself imply a different limiting bulk distribution.

The principal argument is based on the limited distributional change available to a selector. The conditional law of a vector selected from a pool of independent candidates is dominated by the pool size times the law of one candidate. Gaussian transportation converts this domination into a dimension-normalized quadratic coupling cost. Summing the costs over an adaptive experiment gives a comparison with an independent Gaussian matrix, without requiring convergence of the policy or independence of the selected columns.

For Bayesian bandits, this comparison gives explicit posterior uncertainty limits under Thompson sampling, greedy selection, and every other admissible policy. The result concerns statistical information, rather than equality of rewards. Two policies may collect the same first-order amount of information while using it differently when choosing an arm.

The directional analysis gives a complementary conclusion. Maximization of a Gaussian linear score has an exact decomposition into a one-dimensional Gaussian maximum and an independent orthogonal Gaussian component. The raw second moment can contain a rank-one deformation, whereas the centered covariance has a different coefficient. Under a fixed direction, the raw second moment determines an explicit outlier transition. Under two-arm selection, however, the entire outer-product distribution agrees with its Gaussian counterpart. These exact facts rule out several inferences based solely on the appearance of a preferred direction.

\section{Existing work}\label{sec:related}
The analysis draws on random matrix theory, adaptive bandit inference, and Gaussian transportation, while addressing a different asymptotic object from each of these literatures.

\paragraph{Sample covariance limits and regularized estimation.}
\citet{MP1967} identified the limiting spectral distribution of large sample covariance matrices. The spectral and resolvent theory developed in this direction is treated systematically by \citet{BaiSilverstein2010}. These results describe the independent Gaussian matrix used as a comparison object below; they do not directly apply to a matrix whose columns are chosen using previous rewards. In high-dimensional statistics, \citet{DW2018} derive exact proportional limits for ridge prediction and classification through spectral transforms. The posterior trace limits in the present paper have the same independent-design reference law, but the main issue is establishing that this law remains valid uniformly over adaptive selectors.

\paragraph{Transportation and matrix comparison.}
\citet{Talagrand1996} bounds quadratic transportation cost by relative entropy for Gaussian measures. We apply that inequality to conditional selected-arm distributions and concatenate the resulting couplings along the experiment. The subsequent comparison of Gram matrices uses standard unitarily invariant norm inequalities; see \citet{Bhatia1997}. The new conclusion obtained from these tools is a policy-uniform spectral comparison under a bound on the logarithm of the candidate-pool size. Neither the transportation inequality nor the independent-design Marchenko--Pastur law alone states this adaptive conclusion.

\paragraph{Spiked random matrices.}
The eigenvalue transition for a finite-rank population deformation is central to the work of \citet{BBP2005}. Almost-sure outlier locations for spiked sample covariance matrices are characterized by \citet{BaikSilverstein2006}, and \citet{Paul2007} studies the associated sample eigenvectors. Our fixed-direction theorem has the corresponding first-order transition, with a spike parameter determined by a Gaussian maximum. The selected row is not centered Gaussian. Its reduction uses independence of the orthogonal Gaussian block and right-orthogonal invariance. We prove the reduction and the resulting first-order formulas; no fluctuation theorem at the spectral edge is asserted.

\paragraph{Bandit algorithms and adaptive covariance geometry.}
\citet{DM2013} study Gaussian-prior linear bandits in a data-poor regime and obtain matching-order cumulative reward bounds under geometric conditions on the available action set. Their work makes the high-dimensional learning regime an established bandit problem, rather than a new motivation by itself. \citet{AG2013} establish regret guarantees for a Gaussian-sampling algorithm in linear contextual bandits. \citet{Russo2018} review posterior sampling and its decision-theoretic interpretation, while \citet{RV2016} relate Thompson-sampling regret to information acquisition. The object studied here is instead the proportional-dimensional spectrum of the collected design. A particularly relevant comparison is \citet{Fan2025}, who analyze the eigenvalues and eigenvectors generated by LinUCB on the Euclidean unit ball and obtain inferential consequences. Their action space is a continuum under the learner's control. Our action space consists of fresh independent Gaussian candidates. This distinction is essential: the pool-size restriction supplies the distributional domination used in our theorem, and an unrestricted unit ball does not satisfy the same condition.

\paragraph{Exploration from random contexts and matrix-valued bandits.}
Sufficient contextual variation can make explicit exploration unnecessary. \citet{Bastani2021} develop exploration-free guarantees under covariate diversity, \citet{Kannan2018} analyze greedy selection under smoothed contexts, and \citet{Raghavan2023} compare batched greedy policies with other algorithms in Bayesian regret. Our results give a spectral statement for every causal policy, including policies not chosen for low regret. They do not replace those regret analyses. Low-rank matrix bandits address another problem: the unknown parameter itself has matrix structure. For example, \citet{Jang2024} combine low-rank estimation and experimental design to obtain regret bounds. Here, the parameter may be an ordinary vector; the random matrix is the design produced by the bandit experiment.

\section{Our contributions}\label{sec:contributions}
The results provide a quantitative separation between the global spectrum, posterior information, and directional concentration of Gaussian bandit data.
\begin{itemize}
\item \textbf{A comparison theorem uniform over all causal selectors.}
Theorem~\ref{thm:coupling} constructs an independent Gaussian comparison design for an arbitrary adaptive policy and bounds its quadratic coupling cost by $2n\log K/d$. It also bounds the normalized nuclear-norm difference between the two Gram matrices. The theorem requires no prescribed selection direction, exploration schedule, stability condition, or limiting allocation law.
\item \textbf{A policy-independent proportional spectral limit.}
Theorem~\ref{thm:bulk} proves convergence to the Marchenko--Pastur law whenever $n/d$ has a finite positive limit and $\log K=o(d)$. The convergence holds in expected Wasserstein distance uniformly over policies. Corollary~\ref{cor:resolvent} supplies the corresponding resolvent equation and explicit regularized inverse trace. Adaptive selection can therefore change individual directions without changing these first-order global spectral quantities.
\item \textbf{Explicit Bayesian uncertainty and information limits.}
Theorem~\ref{thm:posterior} transfers the comparison to posterior covariance, realized squared estimation error, and mutual information. The posterior trace converges uniformly over compact proportional-time intervals and uniformly over policies. Corollary~\ref{cor:posterior-normal} also gives a conditional normal approximation for squared posterior error, separating Bayesian uncertainty quantification from fixed-parameter frequentist claims.
\item \textbf{An exact two-arm identity, not only an asymptotic approximation.}
Proposition~\ref{prop:winner} identifies the full selected-arm law. Corollary~\ref{cor:two} shows that, for two candidates, its outer product has precisely the Gaussian outer-product law. Consequently, every predictable linear-score policy, including Thompson sampling, produces a Wishart Gram matrix at every finite dimension and horizon. The selected vector nevertheless has a nonzero conditional mean.
\item \textbf{A completely specified selection-induced outlier transition.}
Theorem~\ref{thm:outlier} gives the largest eigenvalue, the second eigenvalue, and the leading eigenvector overlap under a fixed direction. The threshold is $\gamma\beta_K^2=1$, where $\beta_K$ is defined in Definition~\ref{def:winner}. The theorem concerns a static directional experiment and does not presume that a fully adaptive Thompson-sampling trajectory behaves as a fixed spike.
\item \textbf{Two precise limits on extrapolation.}
Proposition~\ref{prop:overlap} constructs policies with identical overlap curves and different largest-eigenvalue limits. Theorem~\ref{thm:exponential} constructs an exponentially large arm pool whose selected Gram matrix has a rescaled, rather than unchanged, Marchenko--Pastur bulk. These results identify information absent from an overlap curve and establish the order-sharpness of the subexponential arm-growth condition for policy-uniform bulk universality.
\end{itemize}
\paragraph{Organization of the paper.}
Section~\ref{sec:setup} defines the adaptive experiment and its spectral objects. Section~\ref{sec:main} presents the coupling, bulk, selected-arm, and posterior results. Section~\ref{sec:outliers} studies fixed-direction outliers and the insufficiency of an overlap curve. Section~\ref{sec:largepool} treats exponentially many arms. Section~\ref{sec:discussion} discusses the scope of the conclusions. All proofs appear in Appendices~\ref{app:main}--\ref{app:largepool}.

\section{Adaptive Gaussian bandit designs}\label{sec:setup}
This section fixes the information available at selection time and the normalization of the random matrices.

\begin{defn}[Adaptive experiment]\label{def:experiment}
For a dimension $d\geq1$ and an integer $K\geq1$, round $t$ presents candidates $x_{t,1},\ldots,x_{t,K}$ that are independent $N(0,I_d/d)$ vectors, independent of the history $\F_{t-1}$. A fresh random seed $U_t$, independent of the candidates and of all previously generated variables, is available to the policy. Set $\mathcal H_t=\F_{t-1}\vee\sigma(U_t)$. An admissible policy chooses $A_t\in\{1,\ldots,K\}$ measurably with respect to $\mathcal H_t\vee\sigma(x_{t,1},\ldots,x_{t,K})$. The next history contains the current seed, candidates, action, and any feedback received after selection. Every subsequent candidate pool is independent of the entire past. Histories and seeds take values in standard Borel spaces. The collection of admissible policies is denoted by $\Pi_{d,K}$.

Write $x_t=x_{t,A_t}$, $X_n=[x_1\ \cdots\ x_n]$, and $G_n=X_nX_n\topr$, with $G_0=0$. The number of arms may depend on dimension, in which case it is denoted by $K_d$.
\end{defn}
The spectral results allow arbitrary feedback mechanisms compatible with Definition~\ref{def:experiment}. The Bayesian specialization below fixes a particular mechanism. The policy never has access to a future candidate pool.

\begin{defn}[Spectral and transportation quantities]\label{def:spectral}
For a positive semidefinite $d\times d$ matrix $B$, let $\mu_B=d^{-1}\sum_{j=1}^d\delta_{\lambda_j(B)}$, with eigenvalues in decreasing order. For probability measures $\nu,\eta$ on a Euclidean space, $W_p(\nu,\eta)^p$ is the infimum of $\int\|x-y\|^p\,dQ(x,y)$ over couplings $Q$ of $\nu$ and $\eta$, for $p\in\{1,2\}$. Relative entropy is $\KL(\nu\|\eta)=\int\log(d\nu/d\eta)\,d\nu$ when $\nu\ll\eta$, and is infinite otherwise. Matrix norms $\|\cdot\|_*$, $\|\cdot\|_F$, and $\|\cdot\|_{\op}$ denote the nuclear, Frobenius, and operator norms.
\end{defn}

\begin{defn}[Reference spectral law]\label{def:MP}
For $\gamma>0$, put $a_\gamma=(1-\sqrt\gamma)^2$ and $b_\gamma=(1+\sqrt\gamma)^2$, and define
\begin{eqn}\label{eq:MP}
\mu_\gamma=(1-\gamma)_+\delta_0+
\frac{\sqrt{(b_\gamma-x)(x-a_\gamma)}}{2\pi x}
\ind_{[a_\gamma,b_\gamma]}(x)\,dx.
\end{eqn}
Set $\mu_0=\delta_0$. For $\operatorname{Im}z>0$ and $\lambda>0$, respectively, define $m_\gamma(z)=\int(x-z)^{-1}\,d\mu_\gamma(x)$ and $h_\gamma(\lambda)=\int(x+\lambda)^{-1}\,d\mu_\gamma(x)$.
\end{defn}
This is the normalization appropriate to $d^{-1}ZZ\topr$ when $Z$ has $n$ independent standard Gaussian columns and $n/d\to\gamma$. In particular, its first moment is $\gamma$, not one.

\begin{defn}[Linear-score selection]\label{def:winner}
A linear-score policy chooses an $\mathcal H_t$-measurable unit vector $u_t$ before the current candidates and selects $A_t=\argmax_{a\leq K}\langle x_{t,a},u_t\rangle$. Ties are resolved by the smallest index. For independent standard normal variables $Z_1,\ldots,Z_K$, put $M_K=\max_{a\leq K}Z_a$, $\mu_K=\E M_K$, and $\beta_K=\E M_K^2-1$. The standard normal density and distribution function are denoted by $\phi$ and $\Phi$.
\end{defn}

\begin{defn}[Gaussian Bayesian bandit]\label{def:Bayes}
Fix $\tau,\sigma>0$. Independently of the candidate pools and policy seeds, let $\theta^\star\sim N(0,\tau^2I_d)$ and let $\varepsilon_t$ be independent $N(0,\sigma^2)$ variables. The feedback is $Y_t=x_t\topr\theta^\star+\varepsilon_t$. The initial observed history is trivial, and subsequent observed histories are generated by the seeds, candidate pools, actions, and rewards. Policies are functions only of their specified observations and independent randomization. Define
\begin{eqn}\label{eq:posterior-objects}
C_t=(\tau^{-2}I_d+\sigma^{-2}G_t)^{-1},
\qquad m_t=\sigma^{-2}C_t\sum_{s=1}^t x_sY_s.
\end{eqn}
For $s\geq0$, define the deterministic functions
\begin{eqn}\label{eq:posterior-limits}
\begin{split}
v(s)&=\frac{\sqrt{\{\sigma^2+\tau^2(s-1)\}^2+4\sigma^2\tau^2}
-\{\sigma^2+\tau^2(s-1)\}}{2},\\
w(s)&=\sigma^4\int\frac{d\mu_s(x)}{(x+\sigma^2/\tau^2)^2},\qquad
j(s)=\frac12\int\log(1+\tau^2x/\sigma^2)\,d\mu_s(x).
\end{split}
\end{eqn}
Mutual information with the observed history means
$I(\theta^\star;\F_t)=\E\KL\{\cL(\theta^\star\mid\F_t)\|\cL(\theta^\star)\}$.
\end{defn}

\section{Main results}\label{sec:main}
The main results first compare arbitrary adaptive designs, then identify the additional structure available under linear-score selection and Gaussian rewards.

\subsection{A quantitative comparison with independent Gaussian data}
\begin{thm}[Adaptive Gaussian coupling]\label{thm:coupling}
Under Definition~\ref{def:experiment}, for every $n$ and every $\pi\in\Pi_{d,K}$, an extension of the probability space supports independent vectors $z_1,\ldots,z_n\sim N(0,I_d/d)$ such that, with $Z_n=[z_1\ \cdots\ z_n]$, all prefixes $1\leq t\leq n$ satisfy
\begin{eqn}\label{eq:coupling-cost}
\E\|X_t-Z_t\|_F^2\leq\frac{2t\log K}{d}.
\end{eqn}
The same coupling satisfies
\begin{eqn}\label{eq:nuclear-bound}
\E\max_{0\leq t\leq n}\frac{\|G_t-Z_tZ_t\topr\|_*}{d}
\leq\frac{2n}{d}\left\{\sqrt{\frac{2\log K}{d}}+\frac{\log K}{d}\right\}.
\end{eqn}
In particular, the right side of \eqref{eq:nuclear-bound} bounds
$\E W_1(\mu_{G_n},\mu_{Z_nZ_n\topr})$ and, after multiplication by $L$, bounds
$\E|d^{-1}\tr f(G_n)-d^{-1}\tr f(Z_nZ_n\topr)|$ for every $L$-Lipschitz function $f$ on $[0,\infty)$.
\end{thm}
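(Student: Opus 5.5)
The proof builds the comparison design one round at a time. Fix round $t$ and condition on $\mathcal H_t$. The candidates are independent of $\mathcal H_t$, and $A_t$ is a measurable function of $\mathcal H_t$ and the pool. Let $\nu_t$ denote the regular conditional law of $x_t=x_{t,A_t}$ given $\mathcal H_t$; this exists because histories live in standard Borel spaces. For any Borel set $B$,
\[
\nu_t(B)=\PP(x_{t,A_t}\in B\mid\mathcal H_t)\le\sum_{a=1}^K\PP(x_{t,a}\in B\mid\mathcal H_t)=K\,\gamma_d(B),
\]
where $\gamma_d=N(0,I_d/d)$. Hence $d\nu_t/d\gamma_d\le K$, which gives $\KL(\nu_t\|\gamma_d)\le\log K$. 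The measure $\gamma_d$ is $d$-strongly log-concave, so Talagrand's inequality gives
\[
W_2(\nu_t,\gamma_d)^2\le \frac{2}{d}\KL(\nu_t\|\gamma_d)\le \frac{2\log K}{d}.
\]

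Next I would construct $z_t$. Using an auxiliary uniform variable attached to round $t$, take a measurable selection of optimal couplings $Q_t(\cdot\mid\mathcal H_t)$ between $\nu_t$ and $\gamma_d$, available by standard measurable-selection results for optimal plans. Then draw $z_t$ from the conditional law of the second coordinate given the first equals $x_t$, under $Q_t(\cdot\mid\mathcal H_t)$. By construction the conditional law of $z_t$ given $\mathcal H_t$ is $\gamma_d$, and
\[
\E[\|x_t-z_t\|^2\mid\mathcal H_t]=W_2(\nu_t,\gamma_d)^2\le 2\log K/d.
\]

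The delicate point is independence of $z_1,\dots,z_n$. I would include $z_s$ and its auxiliary uniform in the filtration, putting them in $\mathcal H_t$ for $s<t$; they may be adjoined to the history without affecting the policy, which never reads them. Then $z_t$ has law $\gamma_d$ conditionally on a $\sigma$-field containing $z_1,\dots,z_{t-1}$. Induction then gives joint independence with common law $\gamma_d$. Summing the conditional bounds and taking expectations yields \eqref{eq:coupling-cost} for every prefix, since the prefix sum is a sum of the first $t$ terms.

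For \eqref{eq:nuclear-bound}, I write $\Delta_t=X_t-Z_t$ and use
\[
G_t-Z_tZ_t\topr=\Delta_tZ_t\topr+Z_t\Delta_t\topr+\Delta_t\Delta_t\topr.
\]
By $\|AB\topr\|_*\le\|A\|_F\|B\|_F$ (Hölder for Schatten norms, \citet{Bhatia1997}),
\[
\|G_t-Z_tZ_t\topr\|_*\le 2\|\Delta_t\|_F\|Z_t\|_F+\|\Delta_t\|_F^2\le 2\|\Delta_n\|_F\|Z_n\|_F+\|\Delta_n\|_F^2,
\]
because $t\mapsto\|\Delta_t\|_F$ and $t\mapsto\|Z_t\|_F$ are nondecreasing. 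This monotonicity is what lets the maximum over $t$ cost nothing. Cauchy--Schwarz gives
\[
\E\|\Delta_n\|_F\|Z_n\|_F\le\sqrt{2n\log K/d}\cdot\sqrt n,
\]
and $\E\|\Delta_n\|_F^2\le 2n\log K/d$. Dividing by $d$ yields
\[
\frac{2n}{d}\sqrt{\frac{2\log K}{d}}+\frac{2n\log K}{d^2},
\]
which is the stated bound.

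The final claims follow from two standard facts. First, for Hermitian $A,B$ with eigenvalues sorted decreasingly, $\sum_j|\lambda_j(A)-\lambda_j(B)|\le\|A-B\|_*$ (Lidskii/Mirsky). The sorted-eigenvalue matching is a coupling of $\mu_{G_n}$ and $\mu_{Z_nZ_n\topr}$, so this bounds $d\,W_1$. Second, for $L$-Lipschitz $f$, $|\tr f(A)-\tr f(B)|\le L\sum_j|\lambda_j(A)-\lambda_j(B)|$.

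I expect the main obstacle to be the measure-theoretic coupling step: choosing optimal plans measurably in $\mathcal H_t$, and verifying that the enlarged filtration keeps each future candidate pool independent of the past while making $z_t$ exactly independent of $(z_s)_{s<t}$. The entropy and Talagrand steps are routine once the density bound is in hand.
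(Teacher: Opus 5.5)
Your proposal is correct and follows essentially the same route as the paper: you dominate the selected-arm law by $K$ times the Gaussian law to get relative entropy at most $\log K$, apply Talagrand's inequality, build a sequential measurable optimal coupling with the comparison variables adjoined to the past to obtain independence, and then use the identical nuclear-norm, Cauchy--Schwarz, and Lidskii-type steps. The only cosmetic difference is that you apply Talagrand directly to $N(0,I_d/d)$ with constant $2/d$, whereas the paper works with $\sqrt d\,x_t$ against the standard Gaussian and rescales at the end.
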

The vectors $z_t$ are independent of one another; they are not asserted to be independent of the selected design. Such independence would be incompatible with the purpose of the coupling.

\begin{thm}[Policy-uniform empirical spectral law]\label{thm:bulk}
Under Definitions~\ref{def:experiment}--\ref{def:MP}, if $n_d/d\to\gamma\in(0,\infty)$ and $\log K_d=o(d)$, then
\begin{eqn}\label{eq:uniform-bulk}
\sup_{\pi\in\Pi_{d,K_d}}\E W_1(\mu_{G_{n_d}},\mu_\gamma)\longrightarrow0.
\end{eqn}
Consequently, every sequence of admissible policies has the same limiting empirical spectral distribution and the same limiting normalized trace for every fixed Lipschitz spectral function.
\end{thm}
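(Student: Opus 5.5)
The plan is a triangle inequality in $W_1$, with Theorem~\ref{thm:coupling} handling the adaptive part and the classical Marchenko--Pastur law handling the independent part. For each $d$ and each $\pi\in\Pi_{d,K_d}$, apply Theorem~\ref{thm:coupling} with $n=n_d$. This produces an independent Gaussian design $Z_{n_d}$ on an extension of the probability space. We then bound
\[
\E W_1(\mu_{G_{n_d}},\mu_\gamma)\le \E W_1(\mu_{G_{n_d}},\mu_{Z_{n_d}Z_{n_d}\topr})+\E W_1(\mu_{Z_{n_d}Z_{n_d}\topr},\mu_\gamma).
\]
Both terms will be shown to vanish uniformly in $\pi$.

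\textbf{First term.} By Theorem~\ref{thm:coupling} it is at most
\[
\frac{2n_d}{d}\Bigl\{\sqrt{2\log K_d/d}+\log K_d/d\Bigr\}.
\]
This bound does not depend on $\pi$. It tends to zero because $n_d/d\to\gamma<\infty$ and $\log K_d/d\to0$.

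\textbf{Second term.} The law of $Z_{n_d}$ is that of $n_d$ i.i.d.\ $N(0,I_d/d)$ columns, whatever $\pi$ is. So this term is a single deterministic sequence independent of the policy, and the supremum over $\pi$ passes through it. Writing $Z_{n_d}=d^{-1/2}W$ with $W$ a standard Gaussian $d\times n_d$ matrix, we have $Z_{n_d}Z_{n_d}\topr=d^{-1}WW\topr$. This is exactly the normalization stated after Definition~\ref{def:MP}.

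The classical Marchenko--Pastur theorem (\citet{MP1967}, \citet{BaiSilverstein2010}) then gives $\mu_{Z_{n_d}Z_{n_d}\topr}\to\mu_\gamma$ weakly almost surely. One must check the rescaling: the usual statement concerns $n^{-1}WW\topr$ with ratio $d/n\to1/\gamma$. Multiplying by $n_d/d\to\gamma$ maps that limit law to $\mu_\gamma$ as defined in \eqref{eq:MP}, including the atom $(1-\gamma)_+$ at $0$ when $\gamma<1$.

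\textbf{Main obstacle.} The delicate step is upgrading weak a.s.\ convergence to convergence of $\E W_1$. Weak convergence plus convergence of first moments gives $W_1$ convergence. The first moment is
\[
\int x\,d\mu_{Z_{n_d}Z_{n_d}\topr}=d^{-1}\tr(Z_{n_d}Z_{n_d}\topr)=d^{-1}\|Z_{n_d}\|_F^2,
\]
which is $d^{-2}$ times a $\chi^2_{dn_d}$ variable. It converges a.s.\ to $\gamma=\int x\,d\mu_\gamma$. Hence $W_1(\mu_{Z_{n_d}Z_{n_d}\topr},\mu_\gamma)\to0$ almost surely.

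For expectations, use uniform integrability via
\[
W_1(\mu_{Z_{n_d}Z_{n_d}\topr},\mu_\gamma)\le d^{-1}\|Z_{n_d}\|_F^2+\gamma.
\]
The right side is bounded in $L^2$, since its variance is $O(n_d/d^3)$. Dominated (Vitali) convergence then gives $\E W_1\to0$.

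\textbf{Consequence.} Combining the two terms proves \eqref{eq:uniform-bulk}. The final claim follows from the Kantorovich--Rubinstein duality: for $L$-Lipschitz $f$,
\[
\bigl|d^{-1}\tr f(G_{n_d})-\int f\,d\mu_\gamma\bigr|\le L\,W_1(\mu_{G_{n_d}},\mu_\gamma).
\]
Hence the normalized traces converge in $L^1$, uniformly over policies. Weak convergence in probability of $\mu_{G_{n_d}}$ to $\mu_\gamma$ for any policy sequence follows likewise.
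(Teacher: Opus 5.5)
Your proposal is correct and follows the paper's proof essentially step for step. The shared steps are the triangle inequality through the coupled Gaussian matrix, the policy-free bound from Theorem~\ref{thm:coupling}, Marchenko--Pastur plus first-moment convergence to get $W_1$ convergence, and the $L^2$ bound on $d^{-1}\|Z_{n_d}\|_F^2+\gamma$ to pass to expectations. The only difference is that you argue almost surely where the paper argues in probability; this is immaterial for the expectation bound, and your almost-sure first-moment claim holds by Borel--Cantelli since the variance is $O(d^{-2})$.
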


\begin{cor}[Resolvents and inverse traces]\label{cor:resolvent}
Under the assumptions of Theorem~\ref{thm:bulk}, for each $z$ with $\operatorname{Im}z>0$ and each $\lambda>0$,
\[
\sup_\pi\E\left|\frac1d\tr(G_{n_d}-zI_d)^{-1}-m_\gamma(z)\right|\to0,
\qquad
\sup_\pi\E\left|\frac1d\tr(G_{n_d}+\lambda I_d)^{-1}-h_\gamma(\lambda)\right|\to0.
\]
The reference transforms satisfy
\begin{eqn}\label{eq:transform-equations}
z m_\gamma(z)^2+(z+1-\gamma)m_\gamma(z)+1=0,
\qquad
\lambda h_\gamma(\lambda)^2+(\lambda+\gamma-1)h_\gamma(\lambda)-1=0.
\end{eqn}
The first solution is selected by $\operatorname{Im}m_\gamma(z)>0$ and $zm_\gamma(z)\to-1$ at infinity. The second is the unique positive solution,
\[
h_\gamma(\lambda)=
\frac{\sqrt{(\lambda+\gamma-1)^2+4\lambda}-(\lambda+\gamma-1)}{2\lambda}.
\]
\end{cor}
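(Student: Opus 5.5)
The corollary follows from Theorem~\ref{thm:bulk} by approximating the resolvent kernels with bounded Lipschitz functions. The transform equations are then a standard computation from \eqref{eq:MP}.

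\textbf{Convergence of the traces.} For fixed $z=E+i\eta$ with $\eta>0$, the function $x\mapsto (x-z)^{-1}$ on $[0,\infty)$ is bounded by $\eta^{-1}$. Its derivative is bounded by $\eta^{-2}$, so it is Lipschitz with constant $\eta^{-2}$. Apply this to the real and imaginary parts separately. Then
\[
\Bigl|\tfrac1d\tr(G_{n_d}-zI_d)^{-1}-m_\gamma(z)\Bigr|
=\Bigl|\int (x-z)^{-1}\,d(\mu_{G_{n_d}}-\mu_\gamma)\Bigr|
\le 2\eta^{-2}\,W_1(\mu_{G_{n_d}},\mu_\gamma),
\]
by Kantorovich--Rubinstein duality. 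Both measures have finite first moments: $\mu_\gamma$ is compactly supported, and $\E\,d^{-1}\tr G_n=n/d$ is finite. Taking expectations and the supremum over $\pi$, the first claim follows from \eqref{eq:uniform-bulk}.

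The same argument works for $x\mapsto (x+\lambda)^{-1}$ on $[0,\infty)$. This function is $\lambda^{-2}$-Lipschitz, so the second claim also follows from \eqref{eq:uniform-bulk}. No truncation is needed, because both kernels are globally Lipschitz on the half-line.

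\textbf{The transform equations.} I would derive the quadratic for $m_\gamma$ directly from the density in \eqref{eq:MP}. Write $s(x)=\sqrt{(b_\gamma-x)(x-a_\gamma)}$, and note that $a_\gamma b_\gamma=(1-\gamma)^2$ and $a_\gamma+b_\gamma=2(1+\gamma)$.

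The cleanest route is to verify the Stieltjes inversion. Take the candidate root
\[
m(z)=\frac{-(z+1-\gamma)+\sqrt{(z+1-\gamma)^2-4z}}{2z}
\]
of the stated quadratic. Note that $(z+1-\gamma)^2-4z=(z-a_\gamma)(z-b_\gamma)$. Choose the square-root branch so that $m$ is analytic on $\mathbb C\setminus[0,\infty)$ and $zm(z)\to-1$ at infinity. The required checks are:
\begin{itemize}
\item $\lim_{\eta\downarrow0}\pi^{-1}\operatorname{Im} m(x+i\eta)$ equals the absolutely continuous density in \eqref{eq:MP};
\item the residue of $m$ at $0$ gives the atom $(1-\gamma)_+$, since $-z\,m(z)\to (1-\gamma)_+$ as $z\to0$ with the chosen branch;
\item $m$ maps the upper half-plane to itself and behaves like $-1/z$ at infinity.
\end{itemize}
These checks identify $m$ as the Stieltjes transform of $\mu_\gamma$, by uniqueness of Nevanlinna functions with this asymptotic behaviour. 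Hence $m=m_\gamma$.

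The two selection conditions in the statement single out this branch. The other root behaves like $-(z+1-\gamma)/z\to-1$ at infinity, so it fails $zm(z)\to-1$ there.

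\textbf{The formula for $h_\gamma$.} Since $h_\gamma(\lambda)=m_\gamma(-\lambda)$ by definition, substituting $z=-\lambda$ into the first equation gives
\[
-\lambda h^2+(-\lambda+1-\gamma)h+1=0,
\]
which is the second equation after multiplying by $-1$. Its two roots have product $-1/\lambda<0$, so exactly one is positive. Because $h_\gamma(\lambda)>0$ by definition, it must be that root, which is the displayed formula.

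\textbf{Main obstacle.} The only delicate step is the branch bookkeeping in the Stieltjes inversion. In particular, the atom at $0$ when $\gamma<1$ and its absence when $\gamma\ge1$ must be handled correctly. I would cite \citet{BaiSilverstein2010} for this standard identification, after checking that the normalization, with first moment $\gamma$, matches theirs by the scaling $x\mapsto x/\gamma$ relative to the usual ratio-$\gamma$ law.
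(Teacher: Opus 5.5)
Your proposal is correct and follows the paper's own argument. The resolvent kernels are Lipschitz on $[0,\infty)$ with constants $(\operatorname{Im}z)^{-2}$ and $\lambda^{-2}$, so the trace limits follow directly from Theorem~\ref{thm:bulk}. The first identity in \eqref{eq:transform-equations} is the Marchenko--Pastur transform identity in this normalization. Substituting $z=-\lambda$, after extending $m_\gamma$ off the upper half-plane through its integral formula, gives the second equation, whose roots have product $-1/\lambda<0$. The only difference is that you sketch a self-contained Stieltjes-inversion check of the candidate root, where the paper simply cites the identity. Your factorization $(z+1-\gamma)^2-4z=(z-a_\gamma)(z-b_\gamma)$ and atom computation $-zm(z)\to(1-\gamma)_+$ in that check are correct.
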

The theorem is a bulk statement. The metric in \eqref{eq:uniform-bulk} does not control a single extreme eigenvalue at order one.

\subsection{Exact selected-arm laws}
\begin{prop}[Gaussian maxima and orthogonal noise]\label{prop:winner}
Under Definition~\ref{def:winner}, conditionally on $\mathcal H_t$,
\begin{eqn}\label{eq:winner-representation}
\sqrt d\,x_t\ \stackrel{\mathrm d}{=}\ M_Ku_t+(I_d-u_tu_t\topr)g,
\end{eqn}
where $g\sim N(0,I_d)$ and $M_K$ are independent. The conditional density of $\sqrt d\,x_t$ at $y$ is
$K\phi_d(y)\Phi(u_t\topr y)^{K-1}$, where $\phi_d$ is the standard $d$-dimensional Gaussian density. In particular,
\[
\E(\sqrt d\,x_t\mid\mathcal H_t)=\mu_Ku_t,
\quad
\E(d x_tx_t\topr\mid\mathcal H_t)=I_d+\beta_Ku_tu_t\topr,
\quad
\Cov(\sqrt d\,x_t\mid\mathcal H_t)=I_d+(\beta_K-\mu_K^2)u_tu_t\topr.
\]
Moreover, $\beta_1=\beta_2=0$, and, for $K\geq3$,
\begin{eqn}\label{eq:beta-integral}
\beta_K=\frac{K(K-1)(K-2)}2\int_{\R}\phi(z)^3\Phi(z)^{K-3}\,dz>0.
\end{eqn}
For example, $\mu_2=\pi^{-1/2}$ and $\beta_3=\sqrt3/(2\pi)$.
\end{prop}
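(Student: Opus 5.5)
Conditionally on $\mathcal H_t$ the unit vector $u_t$ is fixed, and the candidate pool is independent of $\mathcal H_t$. This is because $\mathcal H_t=\F_{t-1}\vee\sigma(U_t)$, and by Definition~\ref{def:experiment} the pool is independent of both $\F_{t-1}$ and $U_t$. It therefore suffices to work with a deterministic unit vector $u$ and i.i.d.\ vectors $y_a=\sqrt d\,x_{t,a}\sim N(0,I_d)$, and then to integrate the conclusions against the conditional law. The plan has four steps: an orthogonal decomposition of each candidate, the density formula, the moment identities, and the properties of $\beta_K$.

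\textbf{Decomposition.} Write $y_a=\xi_a u+P y_a$ with $\xi_a=u\topr y_a$ and $P=I_d-uu\topr$. Gaussianity and orthogonality make the scalars $\xi_1,\ldots,\xi_K$ and the vectors $Py_1,\ldots,Py_K$ jointly independent. The index $A=\argmax_a\xi_a$, with ties broken by the smallest index, is a function of $(\xi_a)$ alone. Hence $Py_A$ is, conditionally on $(\xi_a)$, distributed as $Py_1\stackrel{\mathrm d}{=}Pg$, and it is independent of $\xi_A=M_K$. This gives \eqref{eq:winner-representation}.

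\textbf{Density.} Ties have probability zero, so for a Borel set $B$ we have $\PP(y_A\in B)=K\,\PP(y_1\in B,\ \xi_1>\xi_a\ \forall a\geq2)$ by exchangeability. Conditioning on $y_1$ and using independence of the other candidates gives
\[
\PP(y_A\in B)=K\int_B\phi_d(y)\Phi(u\topr y)^{K-1}\,dy .
\]

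\textbf{Moments.} From \eqref{eq:winner-representation}, $\E(\sqrt d\,x_t\mid\mathcal H_t)=\mu_K u$, because $Pg$ is centered. For the second moment, the cross terms vanish by independence and centering of $Pg$, so
\[
\E(d\,x_tx_t\topr\mid\mathcal H_t)=\E M_K^2\,uu\topr+P=I_d+(\E M_K^2-1)uu\topr=I_d+\beta_Kuu\topr .
\]
Subtracting $\mu_K^2uu\topr$ gives the covariance.

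\textbf{Properties of $\beta_K$.} For $K=1$, $\E M_1^2=1$, so $\beta_1=0$. For $K=2$, $M_2^2\in\{Z_1^2,Z_2^2\}$. The event that $M_2=Z_1$ is determined by the sign of $Z_1-Z_2$, and $(Z_1,Z_2)\mapsto(Z_2,Z_1)$ preserves the law. A cleaner route is to use $\max(a,b)^2+\min(a,b)^2=a^2+b^2$ together with $\min(Z_1,Z_2)\stackrel{\mathrm d}{=}-\max(Z_1,Z_2)$. These give $2\E M_2^2=\E(Z_1^2+Z_2^2)=2$, so $\beta_2=0$. For general $K$, the density of $M_K$ is $f_K=K\phi\Phi^{K-1}$. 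Integrate by parts twice, using $\phi'(z)=-z\phi(z)$, so that $z\phi=-\phi'$:
\[
\E M_K^2=\int z\cdot zK\phi\Phi^{K-1}\,dz=\int K\,\frac{d}{dz}\bigl(z\Phi^{K-1}\bigr)\phi\,dz=1+K(K-1)\int z\phi^2\Phi^{K-2}\,dz,
\]
where the term $K\int\phi\Phi^{K-1}=1$. Next use $z\phi^2=-\tfrac12(\phi^2)'$ and integrate by parts once more. This yields $\tfrac{(K-1)(K-2)}{2}\int\phi^3\Phi^{K-3}\,dz$, which vanishes when $K=2$ and gives \eqref{eq:beta-integral} for $K\geq3$. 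Positivity is immediate from the integrand. The examples follow from direct computation. First, $\mu_2=\E\max(Z_1,Z_2)=\tfrac12\E|Z_1-Z_2|=\tfrac12\sqrt{2}\sqrt{2/\pi}=\pi^{-1/2}$. Second, $\beta_3=3\int\phi^3$. Since $\phi^3$ is $(2\pi)^{-3/2}e^{-3z^2/2}$, its integral is $(2\pi)^{-3/2}\sqrt{2\pi/3}=(2\pi\sqrt3)^{-1}$, so $\beta_3=\sqrt3/(2\pi)$.

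\textbf{Main obstacle.} I expect the main difficulty to be justifying the conditioning on $\mathcal H_t$ rigorously, via regular conditional distributions on standard Borel spaces and the independence of the pool from $\mathcal H_t$. The other delicate point is bookkeeping the boundary terms in the integrations by parts, which vanish because of Gaussian decay.
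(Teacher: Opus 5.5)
Your proof is correct and follows the paper's argument essentially step for step. You use the same orthogonal decomposition with the argmax depending only on the scalar scores, the same density computation, and the same two integrations by parts; your symmetry argument via $\max(Z_1,Z_2)^2+\min(Z_1,Z_2)^2=Z_1^2+Z_2^2$ for $\beta_2=0$ is a harmless alternative to the paper's odd-integrand observation. One bookkeeping slip: the second integration by parts gives $\beta_K=\tfrac{K(K-1)(K-2)}{2}\int\phi^3\Phi^{K-3}\,dz$, not $\tfrac{(K-1)(K-2)}{2}\int\phi^3\Phi^{K-3}\,dz$ as you wrote, although your stated conclusion and your value $\beta_3=3\int\phi^3$ both use the correct constant.
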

The rank-one coefficient in the raw second moment is not the coefficient in the centered covariance. This distinction matters because $G_t$ is uncentered.

\begin{cor}[Exact Wishart law for two-arm selection]\label{cor:two}
For $K=2$ under Definition~\ref{def:winner}, the matrices $d x_tx_t\topr$ are independent and each has the law of $gg\topr$ for $g\sim N(0,I_d)$. Thus, for every $d,n$, $dG_n$ has the Wishart distribution with identity scale and $n$ degrees of freedom. The conclusion also holds for $K=1$.
\end{cor}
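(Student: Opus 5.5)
Our plan is to work one round at a time with Proposition~\ref{prop:winner}, and then pass from the conditional law of each outer product to the joint law of all of them.

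\emph{Step 1: the outer product in a single round.} Fix a round $t$ and condition on $\mathcal H_t$. For $K=2$, Proposition~\ref{prop:winner} gives
\[
\sqrt d\,x_t\stackrel{\mathrm d}{=}M_2u_t+(I_d-u_tu_t\topr)g,
\]
where $M_2=\max(Z_1,Z_2)$ is independent of $g$. Write $P=I_d-u_tu_t\topr$. Then $d\,x_tx_t\topr$ is determined by the triple $(M_2^2,\ M_2Pg,\ Pgg\topr P)$. Compare this with $y=Wu_t+Pg$, where $W\sim N(0,1)$ is independent of $g$. Then $y\sim N(0,I_d)$, and $yy\topr$ is determined in the same way by $(W^2,\ WPg,\ Pgg\topr P)$.

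\emph{Step 2: the symmetry argument.} We use two facts.
\begin{itemize}
\item $|M_2|\stackrel{\mathrm d}{=}|W|$. Since $M_2=\tfrac{Z_1+Z_2}2+\tfrac{|Z_1-Z_2|}2$ is a sum of independent $N(0,1/2)$ and half-normal$(1/2)$ variables, its density is $2\phi(m)\Phi(m)$. Therefore $|M_2|$ has density $2\phi(m)\{\Phi(m)+\Phi(-m)\}=2\phi(m)$ for $m>0$, which is the law of $|W|$. Equivalently, $M_2^2\sim\chi^2_1$, matching $\beta_2=0$.
\item $Pg\stackrel{\mathrm d}{=}-Pg$, and $Pg$ is independent of the scalar.
\end{itemize}
Let $\varepsilon$ be an independent random sign. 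Then
\[
(M_2^2,\,M_2Pg,\,Pgg\topr P)\stackrel{\mathrm d}{=}(M_2^2,\,|M_2|\varepsilon Pg,\,Pgg\topr P),
\]
and the same holds with $W$ in place of $M_2$. The right-hand sides depend only on the law of $|M_2|$ or $|W|$, so the two triples have the same law. Hence, conditionally on $\mathcal H_t$, $d\,x_tx_t\topr\stackrel{\mathrm d}{=}gg\topr$. This conditional law is $\mathrm{Wishart}_d(I,1)$, which does not depend on $u_t$ or on the history.

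For $K=1$ the selected vector is the single candidate itself, so the claim is immediate.

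\emph{Step 3: independence and the Wishart conclusion.} Let $\mathcal G_t=\sigma(dx_1x_1\topr,\ldots,dx_tx_t\topr)$. This $\sigma$-field is contained in $\F_t$, and $\F_t\subset\mathcal H_{t+1}$. The conditional law of $d\,x_{t+1}x_{t+1}\topr$ given $\mathcal H_{t+1}$ is the fixed law $\mathrm{Wishart}_d(I,1)$. So for bounded measurable $f$ and any $A\in\mathcal G_t$,
\[
\E[f(dx_{t+1}x_{t+1}\topr)\ind_A]=\E f(gg\topr)\,\PP(A).
\]
Induction on $t$ then shows that the outer products are i.i.d.\ with the law of $gg\topr$. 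Therefore $dG_n=\sum_t d\,x_tx_t\topr$ is a sum of $n$ independent $\mathrm{Wishart}_d(I,1)$ matrices, which is $\mathrm{Wishart}_d(I,n)$.

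\emph{Main obstacle.} The delicate step is Step~2. The selected vector itself is not Gaussian: it has conditional mean $\mu_2u_t\neq0$. The argument has to show that the outer product discards exactly the sign information that carries this mean. This works because the cross term $M_2Pg$ can have its sign absorbed into $Pg$. We will also have to make sure the sign-symmetrization is applied to the full triple jointly, not to each component separately.
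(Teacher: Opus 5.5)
Your proposal is correct and follows essentially the paper's route. Both arguments rest on the evenness of $y\mapsto yy\topr$ combined with $\Phi(s)+\Phi(-s)=1$, followed by iterated conditioning for independence across rounds. The only difference is bookkeeping: the paper symmetrizes the $d$-dimensional density $2\phi_d(y)\Phi(u\topr y)$ directly, while you go through the representation \eqref{eq:winner-representation}, reduce to the scalar fact $|M_2|\stackrel{\mathrm d}{=}|W|$, and absorb the sign of the cross term into $Pg$.
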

The assertion concerns outer products, not the selected vectors. In particular, a nonzero conditional mean is consistent with this exact Wishart identity.

\subsection{Posterior uncertainty in Gaussian bandits}
\begin{thm}[Policy-independent posterior limits]\label{thm:posterior}
Under Definition~\ref{def:Bayes}, $\cL(\theta^\star\mid\F_t)=N(m_t,C_t)$ for every admissible policy. If $\log K_d=o(d)$, then, for every $\Gamma<\infty$,
\begin{eqn}\label{eq:uniform-posterior}
\sup_{\pi\in\Pi_{d,K_d}}\E\sup_{0\leq s\leq\Gamma}
\left|\frac1d\tr C_{\lfloor sd\rfloor}-v(s)\right|\longrightarrow0.
\end{eqn}
For $n_d/d\to\gamma\in(0,\infty)$,
\[
\sup_\pi\E\left|\frac1d\tr C_{n_d}^2-w(\gamma)\right|\to0,
\qquad
\sup_\pi\E\left|\frac1d\|\theta^\star-m_{n_d}\|^2-v(\gamma)\right|\to0,
\]
and
\begin{eqn}\label{eq:information}
I(\theta^\star;\F_n)=\frac12\E\log\det(I_d+\tau^2G_n/\sigma^2),
\qquad
\sup_\pi\left|\frac1d I(\theta^\star;\F_{n_d})-j(\gamma)\right|\to0.
\end{eqn}
The function $v$ is strictly positive, satisfies $v(0)=\tau^2$, and is the unique positive solution of
\begin{eqn}\label{eq:v-polynomial}
v(s)^2+\{\sigma^2+\tau^2(s-1)\}v(s)-\sigma^2\tau^2=0.
\end{eqn}
\end{thm}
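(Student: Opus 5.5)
The plan has four steps: the exact Gaussian posterior, the trace limits via Theorem~\ref{thm:bulk} and Corollary~\ref{cor:resolvent}, the realized error, and the mutual information.

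\textbf{Posterior.} I will argue by induction on $t$. The policy's action $A_t$ is measurable with respect to $\mathcal H_t$ and the current candidates. Both are independent of $\theta^\star$ given the past observations, because the seeds and candidate pools are generated independently of $\theta^\star$ and of the noise. Hence the likelihood factor contributed by the selection step does not depend on $\theta^\star$. Only the Gaussian reward factor $\exp\{-(Y_t-x_t\topr\theta^\star)^2/(2\sigma^2)\}$ involves $\theta^\star$. Multiplying these factors against the prior gives the conjugate update, so $\cL(\theta^\star\mid\F_t)=N(m_t,C_t)$ for every admissible policy.

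\textbf{Trace limits.} The identity
\[
\frac1d\tr C_n=\sigma^2\,\frac1d\tr\bigl(G_n+(\sigma^2/\tau^2)I_d\bigr)^{-1}
\]
reduces the pointwise statement to Corollary~\ref{cor:resolvent} with $\lambda=\sigma^2/\tau^2$. A direct computation from the formula for $h_\gamma$ shows that $v(s)=\sigma^2h_s(\sigma^2/\tau^2)$ and that \eqref{eq:v-polynomial} holds. Positivity and $v(0)=\tau^2$ are immediate.

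For uniformity in $s\in[0,\Gamma]$, I will not pass through the $W_1$ bound at each time separately. Instead I will use the maximal nuclear bound \eqref{eq:nuclear-bound} of Theorem~\ref{thm:coupling} with $n=\lceil\Gamma d\rceil$. The map $x\mapsto(x+\lambda)^{-1}$ is $\lambda^{-2}$-Lipschitz, so the trace difference at every time is controlled simultaneously by one $o(1)$ quantity. It then remains to show $\sup_{s\le\Gamma}|d^{-1}\tr C^{Z}_{\lfloor sd\rfloor}-v(s)|\to0$ in $L^1$ for the independent design. Two facts give this.
\begin{itemize}
\item The map $t\mapsto d^{-1}\tr C_t$ is monotone, and each rank-one update changes it by at most $\tau^2/d$.
\item The limit $v$ is continuous and monotone.
\end{itemize}
So convergence on a finite grid of $s$-values, obtained from the Marchenko--Pastur law for the independent design (or from Theorem~\ref{thm:bulk} at each grid point), upgrades to uniform convergence by the usual Dini/P\'olya sandwich.

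The limit for $\tr C^2$ follows in the same way, because $x\mapsto\sigma^4(x+\sigma^2/\tau^2)^{-2}$ is bounded and Lipschitz on $[0,\infty)$.

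\textbf{Realized error.} Given $\F_n$, the vector $\theta^\star-m_n$ is distributed as $N(0,C_n)$. Hence
\[
\E\Bigl[\bigl(d^{-1}\|\theta^\star-m_n\|^2-d^{-1}\tr C_n\bigr)^2\Bigm|\F_n\Bigr]=2d^{-2}\tr C_n^2\le 2\tau^4/d.
\]
Combining this with the trace limit gives the $L^1$ convergence.

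\textbf{Mutual information.} From the posterior $N(m_n,C_n)$ and the prior $N(0,\tau^2I_d)$, the Gaussian Kullback--Leibler formula gives
\[
\frac12\bigl[\tau^{-2}\tr C_n+\tau^{-2}\|m_n\|^2-d+\log\det(\tau^2C_n^{-1})\bigr].
\]
I will take expectations using the tower property, which gives $\E\|m_n\|^2=\E\|\theta^\star\|^2-\E\tr C_n=d\tau^2-\E\tr C_n$. The trace terms then cancel, leaving $\tfrac12\E\log\det(I_d+\tau^2G_n/\sigma^2)$, which is the identity in \eqref{eq:information}.

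For the limit, the function $f(x)=\tfrac12\log(1+\tau^2x/\sigma^2)$ is $\tau^2/(2\sigma^2)$-Lipschitz on $[0,\infty)$. Theorem~\ref{thm:bulk} therefore yields $\sup_\pi\E|d^{-1}\tr f(G_n)-\int f\,d\mu_\gamma|\to0$, and this dominates the absolute difference of expectations.

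\textbf{Main obstacle.} The delicate step is the posterior identity for arbitrary adaptive policies. One must check that the conditional independence of seeds and candidates from $\theta^\star$ survives the adaptive filtration. The proof must also handle policies that may use the feedback only through $\F_{t-1}$.
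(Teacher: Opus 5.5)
Your proposal is correct and follows the paper's proof essentially step for step. It uses the likelihood factorization for the posterior and the identity $v(s)=\sigma^2h_s(\sigma^2/\tau^2)$. For uniformity in $s$, it combines the maximal nuclear-norm coupling over all prefixes with monotonicity of the Gaussian comparison traces and continuity of $v$. It controls the realized error by the conditional variance bound $2\tr C_n^2\le 2d\tau^4$, and obtains the information limit by Lipschitz transfer.

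The only cosmetic difference is in the information identity. You use the Gaussian KL formula together with $\E\|m_n\|^2=d\tau^2-\E\tr C_n$, whereas the paper subtracts the expected posterior entropy from the prior entropy. The two computations are equivalent.
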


\begin{cor}[Conditional normal approximation for posterior error]\label{cor:posterior-normal}
Under Definition~\ref{def:Bayes}, a universal finite constant $C$ satisfies, almost surely,
\[
\sup_{z\in\R}\left|
\PP\left(\left.
\frac{\|\theta^\star-m_n\|^2-\tr C_n}{\sqrt{2\tr C_n^2}}\leq z
\,\right|\F_n\right)-\Phi(z)\right|
\leq C\frac{\tr C_n^3}{(\tr C_n^2)^{3/2}}.
\]
Under the proportional assumptions of Theorem~\ref{thm:posterior}, the right side is $O_\PP(d^{-1/2})$ for every policy sequence.
\end{cor}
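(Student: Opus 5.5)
Conditionally on $\F_n$, Theorem~\ref{thm:posterior} gives $\theta^\star-m_n\sim N(0,C_n)$. Diagonalize $C_n=\sum_j c_j e_je_j\topr$, where $c_j\in(0,\tau^2]$ are $\F_n$-measurable. Then
\[
\|\theta^\star-m_n\|^2\stackrel{\mathrm d}{=}\sum_{j=1}^d c_j\xi_j^2,
\qquad \xi_j\ \text{i.i.d.\ } N(0,1),
\]
so the centered variable is $S=\sum_j c_j(\xi_j^2-1)$. Its summands are independent and mean zero, with total variance $\sum_j 2c_j^2=2\tr C_n^2$. The first claim is then a Berry--Esseen statement for independent, non-identically distributed summands. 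The nonuniform-weights form gives
\[
\sup_z\bigl|\PP(S/\sqrt{2\tr C_n^2}\le z\mid\F_n)-\Phi(z)\bigr|
\le C_0\,\frac{\sum_j \E|c_j(\xi_j^2-1)|^3}{(2\tr C_n^2)^{3/2}}.
\]
The numerator equals $\E|\xi^2-1|^3\sum_j c_j^3=\E|\xi^2-1|^3\,\tr C_n^3$. Since $\E|\xi^2-1|^3<\infty$ is a universal constant, this yields $C\tr C_n^3/(\tr C_n^2)^{3/2}$. All of this is done pointwise for the fixed eigenvalues $c_j$, so it holds almost surely, by regular conditional probabilities on the standard Borel history space.

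For the order statement, first bound the numerator using $c_j\le\tau^2$: $\tr C_n^3\le\tau^2\tr C_n^2$. The ratio is therefore at most $\tau^2(\tr C_n^2)^{-1/2}$. It remains to show that $d^{-1}\tr C_n^2$ is bounded below in probability by a positive constant. Theorem~\ref{thm:posterior} gives $d^{-1}\tr C_{n_d}^2\to w(\gamma)$ in $L^1$, uniformly over policies, and $w(\gamma)=\sigma^4\int(x+\sigma^2/\tau^2)^{-2}d\mu_\gamma(x)>0$. Hence $\tr C_{n_d}^2\ge d\,w(\gamma)/2$ with probability tending to one, and the ratio is $O_\PP(d^{-1/2})$.

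A simpler deterministic route avoids the limit theorem altogether. Since $c_j=1/(\tau^{-2}+\sigma^{-2}\lambda_j(G_n))$, it suffices to control $\lambda_j(G_n)$. We have $\tr G_n=\sum_t\|x_t\|^2$, and the selected-arm norms have $\E\|x_t\|^2\le \E\max_a\|x_{t,a}\|^2\le 1+O(\sqrt{\log K/d}+\log K/d)$. Thus $d^{-1}\tr G_n=O_\PP(1)$ under the proportional assumptions. By Markov's inequality, at least $d/2$ eigenvalues then satisfy $\lambda_j\le 2d^{-1}\tr G_n$, and for those indices $c_j$ is bounded below by a constant. This gives $\tr C_n^2\gtrsim d$ with probability tending to one for every policy sequence.

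The only delicate step is citing a Berry--Esseen bound for non-identically distributed summands with the third-absolute-moment numerator. This is standard (Esseen's inequality), so I expect no real obstacle. The $O_\PP$ step follows from either the limit $w(\gamma)>0$ or the trace argument.
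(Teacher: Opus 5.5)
Your proposal is correct and matches the paper's proof essentially step for step: diagonalizing $C_n$ conditionally on $\F_n$, applying Berry--Esseen to $\sum_j c_j(\xi_j^2-1)$ with the universal constant absorbing $\E|\xi^2-1|^3$ and the variance factor, bounding $\tr C_n^3\le\tau^2\tr C_n^2$, and invoking $d^{-1}\tr C_{n_d}^2\to w(\gamma)>0$ from Theorem~\ref{thm:posterior}. Your alternative eigenvalue-counting lower bound on $\tr C_n^2$ is also valid and slightly more elementary, since it needs only $d^{-1}\tr G_n=O_\PP(1)$ rather than the bulk spectral limit.
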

This is a conditional posterior statement and a joint Bayesian coverage statement. It does not assert conditional-on-$\theta^\star$ frequentist coverage for a fixed unknown parameter.

\section{Directional outliers and the limits of overlap information}\label{sec:outliers}
This section isolates the directional phenomenon in an experiment where its assumptions can be stated and verified exactly.

\begin{thm}[Fixed-direction eigenvalue and eigenvector transition]\label{thm:outlier}
Under Definition~\ref{def:winner}, suppose $K$ is fixed, $u_t=u_d$ for all $t$, where $u_d$ is a deterministic unit vector, and $n_d/d\to\gamma\in(0,\infty)$. Let $v_1(G_{n_d})$ be a unit leading eigenvector. If $\beta_K\sqrt\gamma\leq1$, then
\[
\lambda_1(G_{n_d})\pto b_\gamma,
\qquad |u_d\topr v_1(G_{n_d})|^2\pto0.
\]
If $\beta_K\sqrt\gamma>1$, then
\[
\lambda_1(G_{n_d})\pto(1+\beta_K)(\gamma+\beta_K^{-1}),
\qquad
|u_d\topr v_1(G_{n_d})|^2\pto
\frac{1-(\gamma\beta_K^2)^{-1}}{1+(\gamma\beta_K)^{-1}}.
\]
In both cases $\lambda_2(G_{n_d})\pto b_\gamma$. In particular, $K=3$ has threshold $\gamma=4\pi^2/3$, whereas $K\leq2$ has no fixed-direction upper outlier.
\end{thm}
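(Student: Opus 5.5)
The plan is to reduce the fixed-direction experiment, exactly in distribution, to a rank-one spiked Gaussian sample covariance model, and then read off the limits from \citet{BaikSilverstein2006} and \citet{Paul2007}.

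\textbf{Step 1: row structure.} Since $u_t=u_d$ is deterministic and the candidate pools are independent, Proposition~\ref{prop:winner} makes the selected vectors i.i.d. with $\sqrt d\,x_t\stackrel{\mathrm d}{=}M_tu_d+(I_d-u_du_d\topr)g_t$, where all the $M_t$ and $g_t$ are independent. Rotate so that $u_d=e_1$. Then $\sqrt d\,X_n=Y$, where $Y$ has first row $m\topr=(M_1,\ldots,M_n)$ and remaining $(d-1)\times n$ block $W$ with i.i.d.\ $N(0,1)$ entries, independent of $m$. The overlap becomes $|e_1\topr v_1|^2$, and $G_n=d^{-1}YY\topr$.

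\textbf{Step 2: replacing $m$ by a Gaussian row.} Let $O$ be a Haar orthogonal $n\times n$ matrix independent of everything, and note that $YY\topr=(YO)(YO)\topr$. Conditionally on $m$, the block $WO$ is again i.i.d.\ Gaussian and independent of $O$. Hence $WO$ is independent of $O\topr m=\|m\|\vartheta$, where $\vartheta$ is uniform on the sphere. Therefore the law of $(G_n,e_1)$ depends on $m$ only through $\|m\|$, and $Y\stackrel{\mathrm d}{=}\bigl[\|m\|\vartheta\topr;W\bigr]$.

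Define the Gaussian comparison $\tilde Y=\bigl[\|\xi\|\vartheta\topr;W\bigr]$ with $\xi\sim N(0,(1+\beta_K)I_n)$, and couple the two models so that they share $\vartheta$ and $W$. This is exactly the spiked model with population covariance $\mathrm{diag}(1+\beta_K,1,\ldots,1)$. By the law of large numbers, $\|m\|^2/n\to\E M_K^2=1+\beta_K$ and $\|\xi\|^2/n\to1+\beta_K$. Writing $\tilde Y=DY$ with $D=\mathrm{diag}(c,1,\ldots,1)$ and $c=\|\xi\|/\|m\|\pto1$, we get $\|d^{-1}(YY\topr-\tilde Y\tilde Y\topr)\|_{\op}\le|c^2-1|\,\|G_n\|_{\op}+2|c-1|\,\|G_n\|_{\op}\pto0$. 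The operator norm $\|G_n\|_{\op}$ is tight, for instance by bounding the rank-one part and the Wishart part separately.

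\textbf{Step 3: eigenvalues and the supercritical eigenvector.} Weyl's inequality transfers the limits of $\lambda_1$ and $\lambda_2$ from the spiked model to $G_n$. The sample covariance $S=n^{-1}\tilde Y\tilde Y\topr$ has aspect ratio $y=d/n\to1/\gamma$ and spike $\ell=1+\beta_K$, and $G_n=(n/d)S$. The BBP threshold $\ell-1>\sqrt y$ is then $\beta_K\sqrt\gamma>1$. Above it, the outlier of $G_n$ is
\[
\gamma\,\ell\Bigl(1+\frac{y}{\ell-1}\Bigr)=(1+\beta_K)(\gamma+\beta_K^{-1}),
\]
and Paul's overlap formula $\bigl(1-y/(\ell-1)^2\bigr)/\bigl(1+y/(\ell-1)\bigr)$ becomes the displayed expression. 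At or below the threshold, $\lambda_1\pto b_\gamma$, and in every case $\lambda_2\pto b_\gamma$. In the supercritical case the outlier is separated from $b_\gamma$ by a gap of order one, so the Davis--Kahan theorem transfers the eigenvector overlap across the vanishing operator-norm perturbation.

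\textbf{Step 4: the subcritical and critical overlap (main obstacle).} Without a spectral gap, Davis--Kahan gives nothing, so the overlap must be handled differently. I would first try to stay inside the exact representation. Conditionally on $c$, the matrix $DYY\topr D$ is again a spiked Gaussian model with spike $c^2\ell$. Since $c\pto1$, on an event of probability tending to one the effective spike lies in $[\ell-\varepsilon,\ell+\varepsilon]$. I would then prove a monotonicity bound: the leading eigenvector's overlap with $e_1$ in the spiked Gaussian model is stochastically nondecreasing in the spike strength. One route is to use the secular equation $1=\ell\, y\,\vartheta\topr R(\lambda)\vartheta$ for the companion resolvent $R$ and the corresponding eigenvector formula. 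Granting this, the overlap is dominated by the Gaussian overlap at spike $\ell+\varepsilon$. That overlap is either $0$ (strictly subcritical case) or close to the Paul limit, which tends to $0$ as $\ell+\varepsilon$ approaches the threshold, because the formula vanishes at $\gamma\beta^2=1$. Letting $\varepsilon\downarrow0$ gives overlap $\pto0$, including at the critical point.

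\textbf{Step 5: special cases.} The final claims are immediate from Proposition~\ref{prop:winner}. For $K=3$, $\beta_3=\sqrt3/(2\pi)$ gives the threshold $\gamma=4\pi^2/3$. For $K\le2$, $\beta_K=0$, so the condition $\beta_K\sqrt\gamma>1$ never holds and there is no upper outlier; this is consistent with Corollary~\ref{cor:two}.
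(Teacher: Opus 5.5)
Your Steps 1--3 are a legitimate alternative to the paper's argument. The paper solves the Schur-complement equation directly with explicit Marchenko--Pastur transform identities. You instead use right-orthogonal invariance to see that the law of $G_n$ depends on the first row only through $\|m\|$. You then rescale that row to land exactly on a Gaussian spiked model at operator-norm distance $o_\PP(1)$, and import the eigenvalue limits by Weyl and the supercritical overlap by Davis--Kahan. (Paul's eigenvector theorem is, as far as I recall, formulated for dimension-to-sample ratio below one, that is $\gamma>1$ here; for $\gamma\le1$ check that your reference covers all aspect ratios.)

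The genuine gap is Step~4, which is exactly where this perturbative reduction gives nothing. The monotonicity lemma that carries the step is only granted. The reduction to it is also incorrect as written. $DYY\topr D=\tilde Y\tilde Y\topr$ is the spike-$\ell$ Gaussian model unconditionally. Meanwhile $YY\topr=D^{-1}\tilde Y\tilde Y\topr D^{-1}$ is not a Gaussian spiked model conditionally on $c$, because $c=\|\xi\|/\|m\|$ is a function of the first-row norm of $\tilde Y$, and conditioning on it changes that norm's law. You never exhibit $G_n$ as a mixture of Gaussian spiked models over an independent spike. So a stochastic monotonicity statement about Gaussian models alone does not reach $G_n$.

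What closes the step is a pathwise statement. For fixed $(\vartheta,W)$ put $G(r)=d^{-1}[r\vartheta\topr;W][r\vartheta\topr;W]\topr$. Then $G_n=G(\|m\|)$, and the spike-$s$ Gaussian model is $G(\|\xi_s\|)$ with $\|\xi_s\|^2\sim s\chi^2_n$ independent of $(\vartheta,W)$. Write $B=WW\topr/d=\sum_jb_jf_jf_j\topr$, $\alpha_j=d^{-2}(f_j\topr W\vartheta)^2$ and $S_k(\lambda)=\sum_j\alpha_j(\lambda-b_j)^{-k}$. The block equations give $\lambda_1/\{d^{-1}+S_1(\lambda_1)\}=r^2$, so $\lambda_1$ increases with $r$. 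The overlap equals $\{1+\Lambda(\lambda_1)\}^{-1}$ with $\Lambda=\lambda S_2/(d^{-1}+S_1)$. Using $S_2^2\le S_1S_3$ and $(\lambda-b_j)^{-1}\ge\lambda^{-1}$ (as $B\succeq0$), one gets $(\log\Lambda)'=\lambda^{-1}-2S_3/S_2+S_2/(d^{-1}+S_1)\le\lambda^{-1}-S_2/S_1\le0$. Hence the overlap is nondecreasing in $r$. Since $\|m\|\le\|\xi_{\ell+\varepsilon}\|$ with probability tending to one, your domination by the spike-$(\ell+\varepsilon)$ model follows.

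Alternatively, and more simply, argue as the paper does. The same block equations give $|e_1\topr v_1(G_n)|^2=\{1+J_n(\lambda_1(G_n))\}^{-1}$, with $J_n(\lambda)=\|m\|^2d^{-2}\vartheta\topr W\topr(\lambda-B)^{-2}W\vartheta$ decreasing in $\lambda$ above $\lambda_1(B)$. Your Step~3 gives $\lambda_1(G_n)<b_\gamma+\varepsilon$ with probability tending to one, so $J_n(\lambda_1(G_n))\ge J_n(b_\gamma+\varepsilon)$. By the uniform direction $\vartheta$ of your Step~2, this converges to $(1+\beta_K)\int x(b_\gamma+\varepsilon-x)^{-2}\,d\mu_\gamma(x)$. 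The square-root decay of the density at the upper edge makes this integral diverge as $\varepsilon\downarrow0$, so the overlap vanishes at and below the threshold.
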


\begin{prop}[Identical overlaps, different spectral extremes]\label{prop:overlap}
Fix $K\geq3$ and $\gamma>\beta_K^{-2}$. For $d\geq2$, let the reference direction be $u_d^\star=e_1$. There exist two deterministic linear-score policies for which $|u_t\topr u_d^\star|^2=0$ at every round, but whose Gram matrices satisfy, respectively,
\[
\lambda_1(G_{\lfloor\gamma d\rfloor})\pto
(1+\beta_K)(\gamma+\beta_K^{-1})
\quad\hbox{and}\quad
\lambda_1(G_{\lfloor\gamma d\rfloor})\pto b_\gamma.
\]
Both empirical spectral distributions converge to $\mu_\gamma$.
\end{prop}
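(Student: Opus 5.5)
The construction uses two deterministic linear-score policies whose directions are always orthogonal to $u_d^\star=e_1$, so $|u_t\topr u_d^\star|^2=0$ holds trivially for both. Since $d\geq2$, $e_2$ is a unit vector orthogonal to $e_1$.

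\textbf{Policy (a).} Take $u_t=e_2$ for every $t$. This is exactly the fixed-direction experiment of Theorem~\ref{thm:outlier}, with the deterministic unit vector $u_d=e_2$. Because $\gamma>\beta_K^{-2}$, we have $\beta_K\sqrt\gamma>1$, so Theorem~\ref{thm:outlier} gives $\lambda_1(G_{\lfloor\gamma d\rfloor})\pto(1+\beta_K)(\gamma+\beta_K^{-1})$. The statement requires $n_d/d\to\gamma$, and $n_d=\lfloor\gamma d\rfloor$ satisfies this.

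\textbf{Policy (b).} Here the selection-induced mass must be spread over many directions, so that no rank-one deformation survives. I would cycle deterministically through the orthonormal vectors $e_2,\dots,e_d$, setting $u_t=e_{2+((t-1)\bmod (d-1))}$. Proposition~\ref{prop:winner} represents each selected row as $\sqrt d\,x_t=M_{K,t}u_t+(I-u_tu_t\topr)g_t$, with all pieces independent across $t$ because the directions are deterministic. In coordinates, $\sqrt d X_n$ then has independent columns whose entries are independent. In column $t$, the entry in coordinate $u_t$ is distributed as $M_K$, and every other entry is $N(0,1)$.

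The first step is to write $dG_n=\sum_t y_ty_t\topr$ with $y_t=g_t+(M_{K,t}-g_{t,j_t})e_{j_t}$, where $j_t$ is the coordinate selected at round $t$. Equivalently, $\sqrt dX_n=Z+D$, with $Z$ an i.i.d.\ Gaussian matrix and $D$ having exactly one nonzero entry per column. Since $t\mapsto j_t$ cycles, each row of $D$ receives about $\gamma d/(d-1)\approx\gamma$ entries. Hence $\|D\|_{\op}^2\leq\max_j\sum_{t:j_t=j}(M_{K,t}-g_{t,j_t})^2$, a maximum over $d$ rows of sums of $O(1)$ variables with Gaussian tails. This maximum is $O_\PP(\log d)$, so $\|D\|_{\op}=O_\PP(\sqrt{\log d})$.

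That crude bound is too weak to show that the top eigenvalue does not move at order one, so a finer argument is needed.

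\textbf{Main obstacle: sharpening the top-eigenvalue bound.} I would split $D=\E D+(D-\E D)$. The mean part has entries $\mu_K$ at the positions $(j_t,t)$. Grouped by row, it has the form $\mu_K P$, where $P$ is essentially a partial permutation-type matrix with row sums about $\gamma$ and column sums $1$. Its operator norm is bounded by $\mu_K\sqrt{\lceil\gamma\rceil+1}$, so it is $O(1)$ rather than rank one with a large singular value.

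A cleaner route is to compare $dG_n$ directly with $\sum_t\bigl(I+\beta_Ke_{j_t}e_{j_t}\topr\bigr)^{1/2}\tilde g_t\tilde g_t\topr(\cdots)^{1/2}$. The population second moment averages to $\bigl(1+\beta_K/d\cdot\gamma d/(d-1)\bigr)$ per direction, which is $I+O(1/d)$. I would then adapt the standard Bai--Yin or moment-method bound: the columns are independent with independent entries that are uniformly sub-Gaussian, and their second moment matrix is diagonal with entries $1+O(1/d)$ on average. To handle the non-centered entries, I would center them, noting that the centered entries have uniformly bounded sub-Gaussian norm. I would then apply the Bai--Yin limit for independent non-identically-distributed, uniformly sub-Gaussian entries with variance profile $1+O(1/d)$ in aggregate, which gives $\lambda_1\to b_\gamma$ for the centered part.

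The mean matrix $\mu_K P/\sqrt d$ has operator norm $O(d^{-1/2})$ after the $1/\sqrt d$ normalization. By Weyl's inequality for singular values, it does not move $\lambda_1(G_n)=\|X_n\|_{\op}^2$ at order one. Together with the lower bound $\lambda_1\geq$ the top bulk edge, which holds because the empirical spectral distribution converges, this gives $\lambda_1\pto b_\gamma$.

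Justifying this Bai--Yin step with inhomogeneous variances, where only one entry per column has variance $\beta_K-\mu_K^2+1$ instead of $1$, is the step I expect to be hardest. If the general theorem is unavailable, I would instead run a direct $\varepsilon$-net plus moment bound, using that the extra variance sits on a sparse pattern with $O(1)$ entries per row and column.

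\textbf{Bulk limits.} Both empirical spectral distributions converge to $\mu_\gamma$ by Theorem~\ref{thm:bulk}, since $K$ is fixed, so $\log K=o(d)$, and $W_1$-convergence in expectation implies weak convergence in probability.
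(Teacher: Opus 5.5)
\textbf{Gap in the second policy.} Your treatment of the first policy and of the bulk matches the paper: Theorem~\ref{thm:outlier} applies with $u_d=e_2$ and is supercritical because $\beta_K\sqrt\gamma>1$, and Theorem~\ref{thm:bulk} gives the common spectral law. For the second policy, your cyclic construction and the decomposition $\sqrt d\,X_n=Z+D$ are also the paper's. The error is the sentence dismissing your crude bound as too weak. Since $X_n=(Z+D)/\sqrt d$, the estimate $\|D\|_{\op}=O_\PP(\sqrt{\log d})$ gives
\[
\bigl\|X_n-d^{-1/2}Z\bigr\|_{\op}=d^{-1/2}\|D\|_{\op}=O_\PP\bigl(\sqrt{\log d/d}\bigr)\to0 .
\]
Combine this with $d^{-1/2}\|Z\|_{\op}\pto1+\sqrt\gamma$ and the triangle inequality for the operator norm. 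This yields $\|X_n\|_{\op}\pto1+\sqrt\gamma$, that is, $\lambda_1(G_n)\pto b_\gamma$, directly. That two-line argument is the paper's entire proof. Since $D$ has one nonzero entry per column, $DD\topr$ is diagonal, so your inequality for $\|D\|_{\op}^2$ is actually an equality. You applied the $d^{-1/2}$ normalization to the mean part $\mu_KP/\sqrt d$ but not to $D$. Had you done it consistently, the mean/centered split would also have been unnecessary.

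\textbf{The replacement route is unsupported.} As written, your argument for the second policy rests on steps you have not established.
\begin{itemize}
\item You invoke a Bai--Yin theorem for independent sub-Gaussian entries with an inhomogeneous variance profile. You flag this yourself as the hardest step and leave it unproved, with only a fallback ``$\varepsilon$-net plus moment bound'' sketched.
\item You compare with $(I+\beta_Ke_{j_t}e_{j_t}\topr)^{1/2}\tilde g_t$. This matches second moments, but it is not a coupling of the actual selected columns, which are non-Gaussian with nonzero mean. Matching the averaged population second moment $I+O(1/d)$ does not by itself control $\lambda_1$.
\end{itemize}
Drop that whole paragraph and use the perturbation bound above, which you had already derived. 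The lower bound from convergence of the empirical spectral distribution is then also unnecessary.
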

Thus, an overlap curve relative to a single reference signal does not characterize the persistence of directions orthogonal to that signal. A scalar overlap assumption cannot, by itself, justify a general adaptive outlier theorem.

\section{Exponentially many arms can change the bulk}\label{sec:largepool}
This section shows that the scale of the pool-size condition in Theorem~\ref{thm:bulk} cannot be increased uniformly over policies.

\begin{defn}[Largest-norm selection]\label{def:norm}
Fix $\alpha>0$, set $K_d=\lceil e^{\alpha d}\rceil$, and choose the candidate of largest Euclidean norm at each round. Let $r_\alpha>1$ be the unique solution of
\[
\frac{r_\alpha-1-\log r_\alpha}{2}=\alpha.
\]
For a probability measure $\nu$ on $[0,\infty)$, $(r_\alpha\cdot)_\#\nu$ denotes its image under $x\mapsto r_\alpha x$.
\end{defn}

\begin{thm}[A different bulk at exponential pool size]\label{thm:exponential}
Under Definition~\ref{def:norm}, if $n_d/d\to\gamma\in(0,\infty)$, then
\[
\E W_1\bigl(\mu_{G_{n_d}},(r_\alpha\cdot)_\#\mu_\gamma\bigr)\longrightarrow0.
\]
Since $r_\alpha>1$, this limiting law differs from $\mu_\gamma$.
\end{thm}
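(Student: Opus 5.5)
The plan rests on rotation invariance. Write each candidate as $x_{t,a}=R_{t,a}\theta_{t,a}/\sqrt d$, where $R_{t,a}^2\sim\chi^2_d$ and $\theta_{t,a}$ is uniform on the sphere $S^{d-1}$, independent of $R_{t,a}$. Largest-norm selection depends only on the radii. Hence the selected vector is $x_t=\rho_t\theta_t$, where $\rho_t^2=d^{-1}\max_{a}R_{t,a}^2$ and $\theta_t$ is uniform on the sphere, independent of $\rho_t$. Across rounds all of these variables are independent, because the rule is non-adaptive in the history.

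Next we compare with a Gaussian design. Let $z_t=(S_t/\sqrt d)\,\theta_t$ with $S_t^2\sim\chi^2_d$ independent of everything else, so that the $z_t$ are i.i.d.\ $N(0,I_d/d)$. Then
\[
x_t=\frac{\rho_t\sqrt d}{S_t}\,z_t,
\qquad
G_n=Z_nD_nZ_n\topr,
\qquad
D_n=\operatorname{diag}\bigl(d\rho_t^2/S_t^2\bigr).
\]
The key concentration step is to show $\max_{t\le n}|\rho_t^2-r_\alpha|\to0$ and $\max_{t\le n}|S_t^2/d-1|\to0$ in probability. For the second, standard $\chi^2$ tail bounds $e^{-c d\epsilon^2}$ together with a union bound over $n=O(d)$ rounds suffice.

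The first is the main obstacle. By the Cramér (Chernoff) bound for $\chi^2_d/d$, the rate function is $I(r)=(r-1-\log r)/2$, so
\[
\PP(R^2/d>r)=\exp\{-dI(r)+O(\log d)\}\quad\text{for } r>1.
\]
Two directions are needed.
\begin{itemize}
\item For $r>r_\alpha+\epsilon$, $K_d\,\PP(R^2/d>r)\le e^{\alpha d-dI(r_\alpha+\epsilon)+O(\log d)}$, which is exponentially small. This survives the union bound over $n$ rounds.
\item For $r=r_\alpha-\epsilon$, $\PP(R^2/d>r)\ge e^{-d(\alpha-\delta)}$ for some $\delta>0$, using the matching lower bound for $\chi^2$ tails (available from the explicit density or a Gaussian-tilting argument). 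Therefore $\PP(\max_a R_a^2/d\le r)\le(1-e^{-d(\alpha-\delta)})^{K_d}\le\exp(-e^{\delta d})$, which again survives the union bound.
\end{itemize}
Continuity and strict monotonicity of $I$ on $(1,\infty)$ give the strict gap $\delta$.

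With these estimates, $\|D_n-r_\alpha I_n\|_{\op}\to0$ in probability. Then
\[
\|G_n-r_\alpha Z_nZ_n\topr\|_*\le\|D_n-r_\alpha I\|_{\op}\,\|Z_n\|_F^2,
\]
and $\|Z_n\|_F^2/d\to\gamma$, so $d^{-1}\|G_n-r_\alpha Z_nZ_n\topr\|_*\to0$ in probability. As in Theorem~\ref{thm:coupling}, the normalized nuclear norm bounds $W_1$ between the empirical spectral measures, which gives $W_1(\mu_{G_n},\mu_{r_\alpha Z_nZ_n\topr})\to0$ in probability.

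By the Marchenko--Pastur law for independent Gaussian designs (the $K=1$ case of Theorem~\ref{thm:bulk}), $\E W_1(\mu_{Z_nZ_n\topr},\mu_\gamma)\to0$. Scaling by $r_\alpha$ multiplies $W_1$ by $r_\alpha$, so $\mu_{r_\alpha Z_nZ_n\topr}$ is close to $(r_\alpha\cdot)_\#\mu_\gamma$.

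To pass from convergence in probability to convergence in expectation, we use uniform integrability. We have $W_1(\mu_{G_n},\nu)\le d^{-1}\tr G_n+\int x\,d\nu$, and $\E(d^{-1}\tr G_n)^2$ is bounded because $\E\rho_t^4\le\E\sum_a(R_a^2/d)^2$ is too crude. Instead, bound the tail of $\rho_t^2$ beyond $2r_\alpha$ by $K_d\,\PP(R^2/d>r)$ and integrate, which gives $\sup_d\E\rho_t^4<\infty$.

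Finally, the limit differs from $\mu_\gamma$ because the first moments are $r_\alpha\gamma\ne\gamma$.
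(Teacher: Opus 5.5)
Your proposal is correct and follows essentially the paper's route. It uses the same rotational decomposition and the same $\chi^2$ large-deviation argument placing the selected squared norm at $r_\alpha$. It even uses the same coupling: your $z_t=(S_t/\sqrt d)\,\theta_t$ is the paper's $\sqrt{Q_d}\,v$, sharing the direction with an independent radius. It then finishes with the same nuclear-norm comparison to $r_\alpha Z_nZ_n\topr$ and the Gaussian Marchenko--Pastur law.

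The only real difference is how the coupling error is controlled. The paper uses per-round $L^2$ convergence $\sqrt{S_d}\to\sqrt{r_\alpha}$ to get $\E\|X_n-\sqrt{r_\alpha}Z_n\|_F^2=o(n)$, which gives the expectation bound directly. You instead control all rounds at once through union bounds and $\|D_n-r_\alpha I_n\|_{\op}$, then upgrade convergence in probability by uniform integrability. Your garbled sentence about $\E\rho_t^4$ should simply be replaced by the tail-integration bound you go on to describe. This path is slightly longer, but it also yields $\|G_n-r_\alpha Z_nZ_n\topr\|_{\op}\to0$ in probability, hence control of the spectral edges.
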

The theorem establishes sharpness in the order of $\log K_d$, not a necessary condition for every individual policy. Some policies retain the Gaussian bulk even with a larger available pool because they do not use that additional choice.

\section{Discussion}\label{sec:discussion}
The spectral effect of Gaussian bandit selection depends on the scale of the statistic being examined. A bounded or subexponentially growing pool cannot change the empirical spectral law at leading order, irrespective of the feedback-driven policy. Nevertheless, a persistent direction can produce an order-one outlier, and the location and alignment of that outlier are not recoverable from the bulk law.

The Bayesian consequences are equally specific. All admissible policies have the same first-order posterior mean-square uncertainty and mutual information in the proportional regime. Equality of these learning quantities does not imply equality of action quality. A policy may use its posterior mean, a posterior draw, or no reward information at all while collecting the same leading-order amount of information. The distinction between statistical learning and immediate reward is therefore intrinsic to this model.

The two-arm identity illustrates why conditional second moments require careful interpretation. Linear-score selection changes the mean of the chosen vector but preserves its entire outer-product distribution when there are two candidates. In this case, the Gram matrix is exactly Wishart, not merely asymptotically equivalent to Wishart. For three or more candidates, the additional second moment is visible in a persistent direction, while its contribution to normalized bulk statistics remains negligible.

The fixed-direction transition does not establish a learning-induced transition for the fully adaptive directions of Thompson sampling. Such a statement requires control of their temporal geometry, beyond a one-dimensional overlap law. Proposition~\ref{prop:overlap} makes this limitation explicit. The results also do not provide edge fluctuation laws, anisotropic resolvent limits for arbitrary policies, or second-order differences in posterior risk. These quantities are not controlled by normalized nuclear-norm convergence.

Finally, the assumptions on the candidate pool are substantive. Correlated candidates, a continuous action set, an exponentially large pool, or a design law without the Gaussian transportation property can change the comparison argument. Theorem~\ref{thm:exponential} gives an explicit change of bulk within the same Gaussian model when the selector has exponentially many candidates. The resulting distinction is between the information available through selection and the dimension of the observation, rather than between particular named algorithms.

\appendix
\section{Proofs of the main results}\label{app:main}
The proofs are arranged in the order of the statements in Section~\ref{sec:main}.

\subsection{Proof of Theorem~\ref{thm:coupling}}
\begin{proof}
Fix a finite horizon. Work recursively on an enlarged probability space, and condition at round $t$ on the original past and all Gaussian comparison variables already constructed. Fresh candidates are still independent of this enlarged past: previously introduced coupling variables are generated only from past observations and additional independent randomization.

Let $\nu_t$ be the conditional law of $\sqrt d\,x_t$, and let $\gamma_d$ be standard Gaussian measure on $\R^d$. For every Borel set $B$, selection of one of the candidates implies
\[
\nu_t(B)\leq\sum_{a=1}^K
\PP(\sqrt d\,x_{t,a}\in B\mid\hbox{enlarged past})=K\gamma_d(B).
\]
Consequently $\nu_t\ll\gamma_d$, $d\nu_t/d\gamma_d\leq K$, and
$\KL(\nu_t\|\gamma_d)\leq\log K$. The Gaussian transportation inequality of \citet{Talagrand1996}, in standard-normal normalization, yields
\[
W_2(\nu_t,\gamma_d)^2\leq2\KL(\nu_t\|\gamma_d)\leq2\log K.
\]
The selected vector has a finite second moment because it is one of finitely many Gaussian vectors. Quadratic optimal couplings therefore exist. They can be chosen measurably as a function of the conditional law on these standard Borel spaces. Equivalently, measurable approximate coupling kernels can first be constructed from countable partitions, with the optimal bound obtained by taking a limit.

Disintegrate such a coupling with respect to its selected-vector marginal. After generating the actual round, sample a coupled Gaussian vector using that conditional kernel. Its conditional marginal, given the enlarged past, is always $\gamma_d$. Therefore the resulting Gaussian comparison vectors are independent across rounds. This construction preserves the original experiment: the added variables are not provided as inputs to the policy. After rescaling, it gives
$\E(\|x_t-z_t\|^2\mid\hbox{enlarged past})\leq2\log K/d$. Summation proves \eqref{eq:coupling-cost} for all prefixes.

Put $D_t=X_t-Z_t$. The identity
$G_t-Z_tZ_t\topr=Z_tD_t\topr+D_tZ_t\topr+D_tD_t\topr$ and the inequality $\|AB\topr\|_*\leq\|A\|_F\|B\|_F$ imply
\[
\|G_t-Z_tZ_t\topr\|_*
\leq2\|Z_t\|_F\|D_t\|_F+\|D_t\|_F^2
\leq2\|Z_n\|_F\|D_n\|_F+\|D_n\|_F^2.
\]
Since $\E\|Z_n\|_F^2=n$, Cauchy--Schwarz gives
\[
\E\max_{t\leq n}\frac{\|G_t-Z_tZ_t\topr\|_*}{d}
\leq\frac{2}{d}\sqrt{n\frac{2n\log K}{d}}
+\frac{2n\log K}{d^2},
\]
which is \eqref{eq:nuclear-bound}.

For Hermitian matrices, the sum of absolute differences of ordered eigenvalues is bounded by the nuclear norm of the difference; see \citet{Bhatia1997}. Pairing these eigenvalues gives the asserted $W_1$ bound. Applying a Lipschitz function to the same pairing proves the trace comparison.
\end{proof}

\subsection{Proof of Theorem~\ref{thm:bulk}}
\begin{proof}
For the independent Gaussian comparison matrix, the Marchenko--Pastur theorem gives weak convergence of $\mu_{Z_{n_d}Z_{n_d}\topr}$ to \eqref{eq:MP}. Its first moment is
\[
\frac1d\tr(Z_{n_d}Z_{n_d}\topr)
=\frac1{d^2}\sum_{i=1}^d\sum_{t=1}^{n_d}g_{it}^2,
\]
where $g_{it}$ are independent standard normals. This converges in $L^2$ to $\gamma$, the first moment of $\mu_\gamma$. Weak convergence together with convergence of first moments yields $W_1$ convergence in probability. Moreover, $W_1$ is bounded by the sum of the two first moments, which is uniformly bounded in $L^2$. Thus the convergence also holds in expectation.

The triangle inequality and Theorem~\ref{thm:coupling} now give
\[
\sup_\pi\E W_1(\mu_{G_{n_d}},\mu_\gamma)
\leq\frac{2n_d}{d}
\left\{\sqrt{\frac{2\log K_d}{d}}+\frac{\log K_d}{d}\right\}
+\E W_1(\mu_{Z_{n_d}Z_{n_d}\topr},\mu_\gamma).
\]
Both terms vanish. The argument uses only the marginal law of the comparison matrix, which is identical for every policy, so the supremum is justified.
\end{proof}

\subsection{Proof of Corollary~\ref{cor:resolvent}}
\begin{proof}
On $[0,\infty)$, the function $x\mapsto(x-z)^{-1}$ is Lipschitz with constant at most $(\operatorname{Im}z)^{-2}$, and $x\mapsto(x+\lambda)^{-1}$ is Lipschitz with constant $\lambda^{-2}$. The convergence follows from Theorem~\ref{thm:bulk}. The Stieltjes transform of \eqref{eq:MP} satisfies the first equation in \eqref{eq:transform-equations}; this is the Marchenko--Pastur transform identity in the present normalization. Its sign and asymptotic conditions select the probability-measure branch. Substituting $z=-\lambda$ gives the second equation. Its constant term is negative and its leading coefficient positive, so exactly one root is positive; the displayed formula is that root.
\end{proof}

\subsection{Proof of Proposition~\ref{prop:winner}}
\begin{proof}
Condition on $\mathcal H_t$ and abbreviate its fixed direction by $u$. Decompose each candidate as
$\sqrt d\,x_{t,a}=Z_au+g_{a,\perp}$. The variables $Z_a$ are independent standard normals, the $g_{a,\perp}$ are independent $N(0,I_d-uu\topr)$ vectors, and the two families are independent. The maximizing index is a function only of $(Z_1,\ldots,Z_K)$. Conditional on that family, the orthogonal vector at the maximizing index still has law $N(0,I_d-uu\topr)$, independent of the maximum. This proves \eqref{eq:winner-representation}.

A given candidate with standardized value $y$ is selected precisely when every other score is at most $u\topr y$. Summing over its $K$ possible indices gives the stated density. Independence of the two components yields the three moment identities, including the subtraction of $\mu_K^2uu\topr$ for centered covariance.

For $K\geq2$, integration by parts, using $z^2\phi(z)=\phi(z)-(z\phi(z))'$, gives
\[
\E M_K^2=K\int z^2\phi(z)\Phi(z)^{K-1}\,dz
=1+K(K-1)\int z\phi(z)^2\Phi(z)^{K-2}\,dz.
\]
For $K=2$, the final integrand is odd. For $K\geq3$, using $(\phi^2)'=-2z\phi^2$ and integrating again proves \eqref{eq:beta-integral}. All boundary terms vanish by Gaussian tails. The case $K=1$ is immediate. Finally,
$\E\max(Z_1,Z_2)=\tfrac12\E|Z_1-Z_2|=\pi^{-1/2}$, and evaluating $3\int\phi^3$ gives $\beta_3=\sqrt3/(2\pi)$.
\end{proof}

\subsection{Proof of Corollary~\ref{cor:two}}
\begin{proof}
For $K=2$, the conditional density in Proposition~\ref{prop:winner} is $p_u(y)=2\phi_d(y)\Phi(u\topr y)$. Since $\Phi(s)+\Phi(-s)=1$, it satisfies
$p_u(y)+p_u(-y)=2\phi_d(y)$. If $F$ is any bounded measurable function of a symmetric matrix, then $F(yy\topr)$ is even in $y$. Integrating against the symmetrized density shows that the conditional law of $d x_tx_t\topr$ equals the law of $gg\topr$, independently of $u$ and of the history. Iterated conditioning proves independence of these matrices across rounds. Their sum therefore has the claimed Wishart law. For $K=1$, no selection occurs.
\end{proof}

\subsection{Proof of Theorem~\ref{thm:posterior}}
\begin{proof}
\textbf{Posterior distribution.}
For a realized observed transcript, the density of every candidate pool and the conditional probability of every action contribute factors that do not depend on $\theta^\star$. The same holds for the independent policy seeds. The only parameter-dependent factors are the prior density and
$\prod_{s\leq t}\exp\{-(Y_s-x_s\topr\theta)^2/(2\sigma^2)\}$. Completing the square gives the Gaussian posterior with the objects in \eqref{eq:posterior-objects}. In particular, $0\prec C_t\preceq\tau^2I_d$.

\textbf{Covariance traces.}
The functions $f(x)=(\tau^{-2}+\sigma^{-2}x)^{-1}$ and $f(x)^2$ are bounded and Lipschitz on $[0,\infty)$. Theorem~\ref{thm:bulk} and Corollary~\ref{cor:resolvent} give the pointwise limits $v(s)=\sigma^2h_s(\sigma^2/\tau^2)$ and $w(s)$. The explicit formula in \eqref{eq:posterior-limits} and the polynomial \eqref{eq:v-polynomial} follow by substitution in \eqref{eq:transform-equations}.

To obtain uniformity in time, couple all prefixes through $N=\lfloor\Gamma d\rfloor$ as in Theorem~\ref{thm:coupling}. The expected supremum of the normalized trace differences between the adaptive and Gaussian matrices is bounded by the Lipschitz constant of $f$ times the right side of \eqref{eq:nuclear-bound}, which vanishes uniformly over policies. For the Gaussian comparison process, $d^{-1}\tr f(Z_{\lfloor sd\rfloor}Z_{\lfloor sd\rfloor}\topr)$ is nonincreasing in $s$. Its pointwise limits are the continuous function $v(s)$. A finite partition of $[0,\Gamma]$, monotonicity between partition points, and then uniform continuity of $v$ prove convergence of the supremum in probability. Since these traces and $v$ lie in $[0,\tau^2]$, the convergence holds in expectation. This proves \eqref{eq:uniform-posterior}.

\textbf{Realized squared error.}
Conditional on $\F_n$, the error is $N(0,C_n)$. Hence its squared norm has conditional mean $\tr C_n$ and conditional variance $2\tr C_n^2\leq2d\tau^4$. Therefore
\[
\E\left|\frac{\|\theta^\star-m_n\|^2-\tr C_n}{d}\right|
\leq\frac{\sqrt2\tau^2}{\sqrt d}.
\]
Combining this bound with the trace limit proves the assertion uniformly over policies.

\textbf{Information.}
The entropy of the Gaussian prior is $\tfrac d2\log(2\pi e\tau^2)$; the conditional posterior entropy is $\tfrac12\log\det(2\pi e C_n)$. Taking the difference in expectation proves the identity in \eqref{eq:information}. The function $x\mapsto\tfrac12\log(1+\tau^2x/\sigma^2)$ is Lipschitz on $[0,\infty)$. The expected $W_1$ limit therefore proves the information limit. Positivity of $v$ and its value at zero follow from the positive-root characterization.
\end{proof}

\subsection{Proof of Corollary~\ref{cor:posterior-normal}}
\begin{proof}
Given $\F_n$, diagonalize $C_n$ and denote its positive eigenvalues by $c_1,\ldots,c_d$. The centered squared error is distributed as $\sum_i c_i(Z_i^2-1)$ for independent standard normals. The Berry--Esseen inequality for independent summands bounds the distributional approximation error by
\[
C\frac{\sum_i c_i^3}{(\sum_i c_i^2)^{3/2}}.
\]
Here the universal constant absorbs the finite third absolute moment of $Z_i^2-1$ and the variance factor. This is the asserted conditional bound. Since $c_i\leq\tau^2$, the ratio is at most $\tau^2/(\tr C_n^2)^{1/2}$. Theorem~\ref{thm:posterior} gives $d^{-1}\tr C_{n_d}^2\to w(\gamma)>0$ in probability, proving the rate.
\end{proof}

\section{Proofs of the directional results}\label{app:outliers}
The proofs distinguish a direction that persists through the experiment from one that changes among orthogonal coordinates.

\subsection{Proof of Theorem~\ref{thm:outlier}}
\begin{proof}
Rotate coordinates so that $u_d=e_1$. Proposition~\ref{prop:winner} gives the distributional representation
\[
X_n=\frac1{\sqrt d}\begin{pmatrix}w\topr\\ Z\end{pmatrix},
\]
where the entries of $w\in\R^n$ are independent copies of $M_K$, the entries of the $(d-1)\times n$ matrix $Z$ are independent standard normals, and $w$ and $Z$ are independent. In particular, $\|w\|^2/n\to1+\beta_K$ in probability. The nonzero mean of $w$ does not invalidate this representation.

Let $B=ZZ\topr/d$. Its largest two eigenvalues converge to $b_\gamma$, and its empirical spectral distribution, normalized by $d$, has limit $\mu_\gamma$ up to the immaterial missing coordinate. These are the Gaussian sample covariance and spectral-edge limits; see \citet{BaiSilverstein2010}.

For fixed $\lambda>b_\gamma$, define
\[
Q_n(\lambda)=\frac1{d^2}w\topr Z\topr(\lambda I-B)^{-1}Zw,
\qquad
J_n(\lambda)=\frac1{d^2}w\topr Z\topr(\lambda I-B)^{-2}Zw.
\]
Right-orthogonal invariance of $Z$ permits the direction $w/\|w\|$ in these quadratic forms to be replaced, conditionally on its norm and on the singular values of $Z$, by a uniform unit vector in $\R^n$. For a symmetric matrix $H$ and such a vector $v$,
\[
\E(v\topr Hv)=\tr H/n,
\qquad
\Var(v\topr Hv)\leq\frac{2\|H\|_{\op}^2}{n}.
\]
The matrices $d^{-1}Z\topr(\lambda I-B)^{-k}Z$, for $k=1,2$, have bounded operator norm with probability tending to one when $\lambda$ stays a positive distance above $b_\gamma$. Spectral convergence and the preceding conditional variance bound therefore give
\begin{eqn}\label{eq:schur-limits}
Q_n(\lambda)\pto(1+\beta_K)\int\frac{x}{\lambda-x}\,d\mu_\gamma(x),
\qquad
J_n(\lambda)\pto(1+\beta_K)\int\frac{x}{(\lambda-x)^2}\,d\mu_\gamma(x).
\end{eqn}
The convergence is uniform on compact intervals above the edge, by a finite net and resolvent derivative bounds.

The Schur complement shows that the largest eigenvalue of $G_n$ is the unique root above $\lambda_1(B)$ of
\begin{eqn}\label{eq:schur-equation}
D_n(\lambda):=\lambda-\|w\|^2/d-Q_n(\lambda)=0.
\end{eqn}
Almost surely the off-diagonal block has nonzero projection on the leading eigenvector of $B$, so this root is strictly above $\lambda_1(B)$. Also, $D_n$ is strictly increasing there. Its deterministic limit is
\[
D(\lambda)=\lambda-(1+\beta_K)
\left\{\gamma+\int\frac{x}{\lambda-x}\,d\mu_\gamma(x)\right\}.
\]
The matrices have bounded operator norm in probability, since
$\|X_n\|_{\op}\leq\|Z\|_{\op}/\sqrt d+\|w\|/\sqrt d$.
Thus roots cannot escape to infinity.

Write $g_\gamma(\lambda)=\int(\lambda-x)^{-1}\,d\mu_\gamma(x)$. The transform identity gives $g_\gamma(b_\gamma)=1/(1+\sqrt\gamma)$ and
$\int x/(b_\gamma-x)\,d\mu_\gamma(x)=\sqrt\gamma$. It follows that
\[
D(b_\gamma+)=(1+\sqrt\gamma)(1-\beta_K\sqrt\gamma).
\]
Since $D$ is strictly increasing above the edge, it has a root there exactly when $\beta_K\sqrt\gamma>1$. To evaluate it explicitly, for $r>\gamma^{-1/2}$ put $\lambda(r)=(1+r)(\gamma+r^{-1})$. The positive-side transform equation and its branch at infinity give
\[
g_\gamma\{\lambda(r)\}=\frac1{1+\gamma r},
\qquad
\int\frac{x}{\lambda(r)-x}\,d\mu_\gamma(x)
=\lambda(r)g_\gamma\{\lambda(r)\}-1=\frac1r.
\]
Taking $r=\beta_K$ solves $D(\lambda)=0$, giving the root $(1+\beta_K)(\gamma+\beta_K^{-1})$. Uniform convergence in \eqref{eq:schur-limits} then proves the supercritical eigenvalue limit. In the other case, $D(b_\gamma+\epsilon)>0$ for every $\epsilon>0$, including at the critical threshold. Equation~\eqref{eq:schur-equation}, monotonicity, and $\lambda_1(G_n)\geq\lambda_1(B)$ imply convergence to $b_\gamma$.

The block eigenvector equation gives the exact identity
\begin{eqn}\label{eq:overlap-schur}
|e_1\topr v_1(G_n)|^2=\{1+J_n(\lambda_1(G_n))\}^{-1}.
\end{eqn}
In the supercritical case, uniform convergence away from the edge applies at the random eigenvalue. Differentiating the preceding identity $\int x/\{\lambda(r)-x\}\,d\mu_\gamma(x)=1/r$, and using $\lambda'(r)=\gamma-r^{-2}$, yields
\[
\int\frac{x}{\{\lambda(r)-x\}^2}\,d\mu_\gamma(x)
=\frac1{\gamma r^2-1}.
\]
Consequently,
\[
\left\{1+(1+\beta_K)\int\frac{x}{(\lambda-x)^2}\,d\mu_\gamma(x)\right\}^{-1}
=\frac{1-(\gamma\beta_K^2)^{-1}}{1+(\gamma\beta_K)^{-1}}
\]
at $\lambda=(1+\beta_K)(\gamma+\beta_K^{-1})$.

In the subcritical and critical cases, fix $\epsilon>0$. With probability tending to one, $\lambda_1(G_n)<b_\gamma+\epsilon$ while both values are above $\lambda_1(B)$. Hence $J_n(\lambda_1(G_n))\geq J_n(b_\gamma+\epsilon)$. The second integral in \eqref{eq:schur-limits} diverges as $\epsilon\downarrow0$: the density has square-root decay at the positive upper edge, so its product with $(b_\gamma+\epsilon-x)^{-2}$ has an integral tending to infinity. Equation~\eqref{eq:overlap-schur} proves vanishing overlap.

Finally, principal-submatrix interlacing gives
$\lambda_1(B)\geq\lambda_2(G_n)\geq\lambda_2(B)$, proving the second-eigenvalue limit. The numerical threshold for $K=3$ follows from Proposition~\ref{prop:winner}.
\end{proof}

\subsection{Proof of Proposition~\ref{prop:overlap}}
\begin{proof}
For the first policy, take $u_t=e_2$ for every $t$. Its outlier limit follows from Theorem~\ref{thm:outlier}. For the second policy, set
$u_t=e_{2+((t-1)\bmod(d-1))}$. Both policies are orthogonal to $e_1$ at every round.

Use the exact representation \eqref{eq:winner-representation} to couple the second selected design with independent standard Gaussian columns $g_t/\sqrt d$. The difference column is
$d^{-1/2}(M_{K,t}-u_t\topr g_t)u_t$. Each coordinate appears at most a constant number of times over $\lfloor\gamma d\rfloor$ rounds, and different rows of the difference matrix have disjoint column supports. Therefore
\[
\|X_n-Z_n\|_{\op}^2
=\max_{2\leq j\leq d}\frac1d
\sum_{t:u_t=e_j}(M_{K,t}-g_{t,j})^2
=O_\PP(\log d/d).
\]
The last bound follows from Gaussian tails and a union bound: a fixed Gaussian maximum and an independent Gaussian have a finite exponential moment of their squared difference for a sufficiently small positive exponent. Since $\|Z_n\|_{\op}=O_\PP(1)$, the Gram matrices differ by $o_\PP(1)$ in operator norm. The Gaussian upper-edge limit proves the claim for the second policy. Their common empirical spectral law follows from Theorem~\ref{thm:bulk}.
\end{proof}

\section{Proof of Theorem~\ref{thm:exponential}}\label{app:largepool}
\begin{proof}
For a single candidate, $\|x_{t,a}\|^2$ has law $\chi_d^2/d$. For every fixed $r>1$, its upper-tail probability satisfies
\begin{eqn}\label{eq:chisq-rate}
\frac1d\log\PP(\chi_d^2/d\geq r)\longrightarrow
-\frac{r-1-\log r}{2}.
\end{eqn}
The upper bound follows by Chernoff's inequality and the moment generating function $(1-2s)^{-d/2}$. For completeness, the matching lower bound follows by changing measure so that the underlying independent Gaussian coordinates have variance $r'>r$. On the event that their average squared value is within a fixed small interval around $r'$, the likelihood ratio has exponential rate $-(r'-1-\log r')/2$; this event has probability tending to one under the changed measure. Letting $r'\downarrow r$ proves \eqref{eq:chisq-rate}.

Let $S_d$ be the largest squared norm in one pool. For any $\epsilon>0$, a union bound and \eqref{eq:chisq-rate} imply $\PP(S_d>r_\alpha+\epsilon)\to0$. Conversely,
\[
\PP(S_d\leq r_\alpha-\epsilon)
\leq\exp\{-K_d\PP(\chi_d^2/d>r_\alpha-\epsilon)\}\to0
\]
whenever $r_\alpha-\epsilon>1$. The other cases follow by lowering the comparison point to any number in $(1,r_\alpha)$. Thus $S_d\to r_\alpha$ in probability.

The same Chernoff bound controls all fixed moments of $S_d$ uniformly in $d$. Indeed, for sufficiently large $r$, $(r-1-\log r)/2\geq r/4$, and the union bound gives an exponentially decaying tail beyond a constant depending only on $\alpha$. Consequently $S_d\to r_\alpha$ in $L^1$, and $\sqrt{S_d}\to\sqrt{r_\alpha}$ in $L^2$.

Rotational invariance implies that the selected vector has the representation $\sqrt{S_d}\,v$, where $v$ is uniform on the unit sphere and independent of $S_d$. Different rounds give independent copies. Introduce an independent $Q_d\sim\chi_d^2/d$ and couple the vector with $\sqrt{Q_d}\,v$, which is $N(0,I_d/d)$. Then
\[
\E\bigl\|\sqrt{S_d}\,v-\sqrt{r_\alpha Q_d}\,v\bigr\|^2
=\E(\sqrt{S_d}-\sqrt{r_\alpha Q_d})^2\longrightarrow0.
\]
For the resulting design matrices, $\E\|X_n-\sqrt{r_\alpha}Z_n\|_F^2=o(n)$. The nuclear-norm argument in the proof of Theorem~\ref{thm:coupling}, now comparing with $\sqrt{r_\alpha}Z_n$, gives
$d^{-1}\E\|G_n-r_\alpha Z_nZ_n\topr\|_*\to0$. The Gaussian Marchenko--Pastur law, its first-moment convergence, and the Wasserstein triangle inequality finish the proof.
\end{proof}

\bibliographystyle{plainnat}
\bibliography{references}
\end{document}